\definecolor{darkgreen}{rgb}{0,0.5,0}
\definecolor{purple}{rgb}{1,0,1}
\newcommand{\kibitz}[2]{\ifnum\Comments=1\textcolor{#1}{#2}\fi}
\DeclareMathOperator*{\argmin}{arg\,min}
\DeclareMathOperator*{\E}{\mathbb{E}}
\DeclareMathOperator*{\one}{\mathbbm{1}}
\DeclareMathOperator*{\U}{\mathcal{U}}
\DeclareMathOperator*{\G}{\mathcal{G}}
\DeclareMathOperator*{\X}{\mathcal{X}}
\newtheorem{theorem}{Theorem}
\newtheorem{proposition}[theorem]{Proposition}
\newtheorem{definition}{Definition}
\newtheorem{lemma}[theorem]{Lemma}
\theoremstyle{remark}
\title{On Proper Learnability between Average- and Worst-case Robustness}
\author{Vinod Raman, Unique Subedi, Ambuj Tewari}
\begin{document}

\maketitle

\begin{abstract}
Recently, \citet{montasser2019vc} showed that finite VC dimension is not sufficient for \textit{proper} adversarially robust PAC learning. In light of this hardness, there is a growing effort to study what type of relaxations to the adversarially robust PAC learning setup can enable proper learnability. In this work, we initiate the study of proper learning under relaxations of the worst-case robust loss. We give a family of robust loss relaxations under which VC classes are properly PAC learnable with sample complexity close to what one would require in the standard PAC learning setup. On the other hand, we show that for an existing and natural relaxation of the worst-case robust loss, finite VC dimension is not sufficient for proper learning. Lastly, we give new generalization guarantees for the adversarially robust empirical risk minimizer.%
\end{abstract}

\section{Introduction}

 As deep neural networks become increasingly ubiquitous, their susceptibility to test-time adversarial attacks has become more and more apparent.  Designing learning algorithms that are robust to these test-time adversarial perturbations has garnered increasing attention by machine learning researchers and practitioners alike.  Prior work on adversarially robust learning has mainly focused on learnability under the \textit{worst-case} robust risk \citep{montasser2019vc, attias2021improved, cullina2018pac},
$$R_{\U}(h;\mathcal{D}) := \E_{(x, y) \sim \mathcal{D}} \left[\sup_{z \in \U(x)}\one\{h(z) \neq y \} \right],$$
where  $\U(x) \subset \mathcal{X}$ is an arbitrary but fixed perturbation set (for example $\ell_p$ balls). In practice, worst-case adversarial robustness is commonly achieved via Empirical Risk Minimization (ERM) of the robust loss or some convex surrogate \citep{madry2017towards, wong2018provable, raghunathan2018certified, bao2020calibrated}. However, a seminal result by \cite{montasser2019vc} shows that any proper learning rule, including ERM, even when trained on an arbitrarily large number of samples, may not return a classifier with small robust risk. These high generalization gaps for the robust loss have also been observed in practice \citep{schmidt2018adversarially}. 
Even worse, empirical studies have shown that classifiers trained to achieve worst-case adversarial robustness exhibit degraded \textit{nominal} performance \citep{dobriban2020provable,raghunathan2019adversarial,su2018robustness,tsipras2018robustness,yang2020closer,zhang2019theoretically,robey2022probabilistically}. 

In light of these difficulties, there has been a recent push to study when proper learning, and more specifically, when learning via ERM is possible for achieving adversarial robustness. The ability to achieve test-time robustness via proper learning rules is important from a practical standpoint. It aligns better with the current approaches used in practice (e.g. (S)GD-trained deep nets),  
and proper learning algorithms are often simpler to implement than improper ones. In this vain, \cite{ashtiani2022adversarially} and \cite{bhattacharjee2022robust} consider adversarial robust learning in the \textit{tolerant} setting,  where the error of the learner is compared with the best achievable error w.r.t. a slightly \textit{larger} perturbation set. They show that the sample complexity of tolerant robust learning can be significantly lower than the current known sample complexity for adversarially robust learning and that proper learning via ERM can be possible under certain assumptions. Additionally, \cite{pmlr-v119-ashtiani20a} studied the proper learnability of VC classes under a PAC-type framework of semisupervised learning. In a different direction, several works have considered relaxing the worst-case nature of the adversarial robust loss $\ell_{\mathcal{U}}(h, (x, y)) = \sup_{z \in \mathcal{U}(x)}\mathbbm{1}\{h(z) \neq y \}$ \citep{robey2022probabilistically,li2020tilted,li2021tilted,laidlaw2019functional,rice2021robustness}. However, the PAC learnability of these relaxed notions of adversarial robust loss has not been well studied. 

In this paper, we study relaxations of the worst-case adversarially robust learning setup from a learning-theoretic standpoint.  We classify existing relaxations of worst-case adversarial robust learning into two approaches: one based on relaxing the loss function and the other based on relaxing the benchmark competitor. Much of the existing learning-theoretic work studying relaxations of adversarial robustness focus on the latter approach. These works answer the question of whether proper PAC learning is feasible if the learner is evaluated against a stronger notion of robustness. In contrast, we focus on the \textit{former} relaxation and pose the question: \emph{can proper PAC learning be feasible if we relax the adversarial robust loss function itself?}
In answering this question, we make the following main contributions:
\begin{itemize} 
\itemsep0em

\item  We show that the finiteness of the VC dimension is {\em not sufficient} for properly learning a natural robust loss relaxation proposed by \citet{robey2022probabilistically}. Our proof techniques involve constructing a VC class that is not properly learnable. 

\item We give a family of robust loss relaxations that interpolate between average- and worst-case robustness. For these losses, we use Rademacher complexity arguments relying on the Ledoux-Talagrand contraction to show that all VC classes are learnable via ERM.

\item We extend a property implicitly appearing in margin theory (e.g., see \citet[Section 5.4]{mohri2018foundations}), which we term ``Sandwich Uniform Convergence'' (SUC), to show new generalization guarantees for the adversarially robust empirical risk minimizer. 

\end{itemize}
%

\section{Preliminaries and Notation}
Throughout this paper we let $[k]$ denote the set of integers $\{1, ..., k\}$, $\mathcal{X}$ denote an instance space, $\mathcal{Y} = \{-1, 1\}$ denote our label space, and $\mathcal{D}$ be any distribution over $\mathcal{X} \times \mathcal{Y}$. Let $\mathcal{H} \subset \mathcal{Y}^{\mathcal{X}}$ denote a hypothesis class mapping examples in $\mathcal{X}$ to labels in $\mathcal{Y}$. 

\subsection{Problem Setting}

In the standard robust learning setting, there exists an adversary who picks an arbitrary index set $\mathcal{G}$ of perturbation functions $g: \X \rightarrow \X$. At test time, the adversary intercepts the labeled example $(x, y)$, exhaustively searches over the perturbation set to find \emph{the worst} perturbation function $g$, and then passes the perturbed instance $g(x)$ to the learner. From this perspective, the adversarially robust loss is defined as $\ell_{\G}(h, (x, y)) := \sup_{g \in \G}\mathbbm{1}\{h(g(x)) \neq y\}$ and its corresponding risk as
$R_{\G}(h;\mathcal{D}) = \E_{(x, y) \sim \mathcal{D}} \left[\ell_{\G}(h, (x, y)) \right].$  We highlight that our use of perturbation functions $\mathcal{G}$ instead of perturbation sets $\mathcal{U}$ is without loss of generality (see Appendix \ref{app:eq} for an equivalence).

However, such a worst-case adversary may be too strong and unnatural, especially in high-dimension. Accordingly, we relax this model by considering a \textit{lazy} adversary that picks both a perturbation set $\mathcal{G}$ and a measure $\mu$ over $\mathcal{G}$. At test-time, the lazy adversary intercepts the labeled example $(x, y)$, \textit{randomly samples} a perturbation function $g \sim \mu$, and then passes the perturbed instance $g(x)$ to the learner. From this interpretation, the goal of the learner is to output a hypothesis such that the \textit{probability} that the lazy adversary succeeds in sampling a bad perturbation function, for any labeled example in the support of $\mathcal{D}$, is small. 


To capture this probabilistic relaxation of worst-case robustness, we consider losses that 
%
are a function of $\mathbb{P}_{g \sim \mu}[h(g(x)) \neq y]$. For a labelled example $(x, y) \in \mathcal{X} \times \mathcal{Y}$, $\mathbbm{P}_{g \sim \mu}\left[h(g(x)) \neq y\right]$ measures the fraction of perturbations in $\mathcal{G}$ for which the classifier $h$ is non-robust. Observe that $\mathbbm{P}_{g \sim \mu}\left[h(g(x)) \neq y \right] = \frac{1 - y\mathbbm{E}_{g \sim \mu}\left[h(g(x)) \right]}{2}$ is an affine transformation of  quantity $y\mathbbm{E}_{g \sim \mu}\left[h(g(x)) \right]$, the probabilistically robust \textit{margin} of $h$ on $(x, y)$ w.r.t. $(\mathcal{G}, \mu)$.  Thus, we focus on loss functions that operate over the margin $y\mathbbm{E}_{g \sim \mu}\left[h(g(x)) \right]$. 

In this work, we are primarily interested in understanding whether probabilistic relaxations of the worst-case robust loss enable ERM-based (proper) learning. That is, given a hypothesis class $\mathcal{H}$, adversary $(\mathcal{G}, \mu)$, loss function $\ell_{\mathcal{G}, \mu}(h, (x, y)) = \ell(y\mathbbm{E}_{g \sim \mu}\left[h(g(x)) \right])$,  and labelled samples from an unknown distribution  $\mathcal{D}$, our goal is to design a \textit{proper} learning algorithm $\mathcal{A}: (\mathcal{X} \times \mathcal{Y})^* \rightarrow \mathcal{H}$ such that for any distribution $\mathcal{D}$ over $\mathcal{X} \times \mathcal{Y}$, the algorithm $\mathcal{A}$ finds a hypothesis $h \in \mathcal{H}$ with low risk with regards to $\ell_{\mathcal{G}, \mu}(h, (x, y))$. 

\subsection{Complexity Measures}

Under the standard 0-1 risk, the Vapnik-Chervonenkis dimension (VC dimension)  plays an important role in characterizing PAC learnability, and more specifically, when ERM is possible. A hypothesis class $\mathcal{H}$ is PAC learnable if and only if its VC dimension is finite \citep{vapnik1971uniform}.
\begin{definition}[VC Dimension]
A set $\{x_1, ..., x_n\} \in \mathcal{X}$ is shattered by $\mathcal{H}$, if $\forall y_1, ..., y_n \in \mathcal{Y}$, $\exists h\in \mathcal{H}$, s.t. $\forall i \in [n]$, $h(x_i) = y_i$. The VC dimension of $\mathcal{H}$, denoted $\text{VC}(\mathcal{H})$, is defined as the largest natural number $n \in \mathbbm{N}$ such that there exists a set $\{x_1, ..., x_n\} \in \mathcal{X}$ that is shattered by $\mathcal{H}$.
\end{definition}

 One \textit{sufficient} condition for proper, ERM-based learning, based on Vapnik's ``General Learning" \citep{vapnik2006estimation}, is the finiteness of the VC dimension of a binary loss class  
 $$\mathcal{L}^{\mathcal{H}} := \{(x, y) \mapsto \ell(h, (x, y)): h \in \mathcal{H}\}$$
 where $\ell(h, (x, y))$ is some loss function mapping to $\{0, 1\}$. In particular, if the VC dimension of the  loss class $\mathcal{L}^{\mathcal{H}}$ is finite, then $\mathcal{H}$ is PAC learnable via oracle access to an ERM for $\ell$ with sample complexity that scales linearly with $\text{VC}(\mathcal{L}^{\mathcal{H}})$. In this sense, if one can upper bound $\text{VC}(\mathcal{L}^{\mathcal{H}})$ in terms of $\text{VC}(\mathcal{H})$, then finite VC dimension is sufficient for proper, ERM-based, learnability. Unfortunately, for adversarially robust learning, \cite{montasser2019vc} show that there can be an arbitrary gap between the VC dimension of the adversarially robust loss class $\mathcal{L}_{\mathcal{G}}^{\mathcal{H}} := \{(x, y) \mapsto \ell_{\mathcal{G}}(h, (x, y)): h \in \mathcal{H}\}$
and the VC dimension of $\mathcal{H}$. Likewise, in Section \ref{sec:relaxedlossnothelp}, we show that for some natural relaxations of the adversarial robust loss, there can also be an arbitrarily large gap between the VC dimension of the loss class and the VC dimension of the hypothesis class. 

As many of the loss functions we consider will actually map to values in $\mathbbm{R}$,  the VC dimension of the loss class will not be well defined. Instead, we can capture the complexity of the loss class via the \textit{empirical} Rademacher complexity.

\begin{definition} [Empirical Rademacher Complexity of Loss Class] 

Let $\ell$ be a loss function, $S = \{(x_1, y_1), ..., (x_n, y_n)\} \in (\mathcal{X} \times \mathcal{Y})^*$ be a set of examples, and $\mathcal{F} = \{(x, y) \mapsto \ell(h,(x,y)): h \in \mathcal{H}\}$ be a loss class. The empirical Rademacher complexity of $\mathcal{F}$ is defined as 
$$\hat{\mathfrak{R}}_m(\mathcal{F}) = \mathbbm{E}_{\sigma}\left[\sup_{f \in \mathcal{F}} \left(\frac{1}{n}\sum_{i=1}^m \sigma_i f(x_i, y_i) \right)\right]$$
where $\sigma_1, ..., \sigma_m$ are independent \emph{Rademacher} random variables. 
\end{definition}

A standard result relates the empirical Rademacher complexity to the generalization error of hypotheses in $\mathcal{H}$ w.r.t. a real-valued bounded loss function $\ell(h, (x, y))$ \citep{bartlett2002rademacher}.

\begin{proposition}[Rademacher-based Uniform Convergence]
\label{thm:rad}
  Let $\mathcal{D}$ be a distribution over $\mathcal{X} \times \mathcal{Y}$ and $\ell(h, (x, y)) \leq c$ be a bounded loss function. With probability at least $1 - \delta$ over the sample $S \sim \mathcal{D}^m$, for all $h \in \mathcal{H}$ simultaneously, 
$$\left|\mathbbm{E}_\mathcal{D}[\ell(h(x), y)] - \hat{\mathbbm{E}}_S[\ell(h(x), y)]\right| \leq  2\hat{\mathfrak{R}}_m(\mathcal{F}) + O\left(c\sqrt{\frac{\ln(\frac{1}{\delta})}{n}}\right)$$
where $\hat{\mathbbm{E}}_S[\ell(h(x), y)] = \frac{1}{|S|}\sum_{(x, y) \in S} \ell(h(x), y)$ is the empirical average of the loss over $S$.
\end{proposition}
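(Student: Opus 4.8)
The plan is to run the classical symmetrization argument of \citet{bartlett2002rademacher} together with two applications of McDiarmid's bounded-differences inequality. Write $n = m = |S|$, and for a sample $S$ define $\Phi(S) := \sup_{h \in \mathcal{H}}\bigl(\E_{\mathcal{D}}[\ell(h(x),y)] - \hat{\mathbb{E}}_S[\ell(h(x),y)]\bigr)$. It suffices to control $\Phi(S)$ and, by the symmetric argument applied to $\hat{\mathbb{E}}_S[\ell] - \E_{\mathcal{D}}[\ell]$, the other side of the absolute value; a union bound then yields the two-sided statement.

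\textbf{Step 1 (concentration of $\Phi$).} Since $\ell$ has range of width at most $c$, replacing a single example in $S$ changes $\hat{\mathbb{E}}_S[\ell(h(x),y)]$ by at most $c/m$ uniformly over $h$, hence changes $\Phi(S)$ by at most $c/m$. McDiarmid's inequality then gives, with probability at least $1-\delta/3$ over $S$, that $\Phi(S) \le \E_S[\Phi(S)] + c\sqrt{\ln(3/\delta)/(2m)}$.

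\textbf{Step 2 (symmetrization).} Introduce an independent ghost sample $S' \sim \mathcal{D}^m$. Writing $\E_{\mathcal{D}}[\ell(h(x),y)] = \E_{S'}\bigl[\hat{\mathbb{E}}_{S'}[\ell(h(x),y)]\bigr]$ and pulling the supremum inside the expectation over $S'$ via Jensen, one gets $\E_S[\Phi(S)] \le \E_{S,S'}\bigl[\sup_h \tfrac1m\sum_{i=1}^m\bigl(\ell(h(x_i'),y_i') - \ell(h(x_i),y_i)\bigr)\bigr]$. Because $(x_i,y_i)$ and $(x_i',y_i')$ are i.i.d., multiplying the $i$-th summand by an arbitrary sign $\sigma_i$ leaves the distribution unchanged; averaging over Rademacher $\sigma$ and then splitting the supremum across the two groups gives $\E_S[\Phi(S)] \le 2\,\mathfrak{R}_m(\mathcal{F})$, where $\mathfrak{R}_m(\mathcal{F}) := \E_S[\hat{\mathfrak{R}}_m(\mathcal{F})]$ is the expected Rademacher complexity.

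\textbf{Step 3 (empirical vs.\ expected Rademacher complexity), and conclusion.} The map $S \mapsto \hat{\mathfrak{R}}_m(\mathcal{F})$ also satisfies bounded differences with constant $c/m$ (changing one example shifts the corresponding $\sigma_i f(x_i,y_i)$ by at most $c$ inside the supremum), so a second McDiarmid application gives $\mathfrak{R}_m(\mathcal{F}) \le \hat{\mathfrak{R}}_m(\mathcal{F}) + c\sqrt{\ln(3/\delta)/(2m)}$ with probability at least $1-\delta/3$. Combining Steps 1--3 and the analogous $1-\delta/3$ event for $\hat{\mathbb{E}}_S[\ell] - \E_{\mathcal{D}}[\ell]$ (whose symmetrization produces the same $\mathfrak{R}_m(\mathcal{F})$, since Rademacher variables are symmetric) by a union bound, we obtain, with probability at least $1-\delta$, that $\bigl|\E_{\mathcal{D}}[\ell(h(x),y)] - \hat{\mathbb{E}}_S[\ell(h(x),y)]\bigr| \le 2\hat{\mathfrak{R}}_m(\mathcal{F}) + O\bigl(c\sqrt{\ln(1/\delta)/m}\bigr)$ simultaneously for all $h \in \mathcal{H}$. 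As this is a textbook result I do not expect a genuine obstacle; the only points requiring care are bookkeeping ones: tracking the width-$c$ bounded-difference constants, allocating the failure probability $\delta$ across the several McDiarmid events, and not forgetting Step 3, which is what upgrades the expected Rademacher complexity that symmetrization naturally produces to the empirical quantity $\hat{\mathfrak{R}}_m(\mathcal{F})$ appearing in the statement.
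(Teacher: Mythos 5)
Your proposal is correct: the three-step argument (McDiarmid on the supremum deviation, symmetrization via a ghost sample and Rademacher signs, then a second McDiarmid to pass from the expected to the empirical Rademacher complexity, with a union bound over the events and over both signs of the deviation) is exactly the classical proof of this proposition. The paper does not prove Proposition \ref{thm:rad} itself but cites it as a known result of \citet{bartlett2002rademacher}, and your write-up faithfully reproduces that standard argument, including the easily forgotten step upgrading $\mathfrak{R}_m(\mathcal{F})$ to $\hat{\mathfrak{R}}_m(\mathcal{F})$.
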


\section{Not All Robust Loss Relaxations Enable Proper Learning}
\label{sec:relaxedlossnothelp}

We begin our study of robust loss relaxations by considering the $\rho$-probabilistically robust loss, 
$$\ell^{\rho}_{\G, \mu}(h, (x, y)) := \mathbbm{1}\{\mathbbm{P}_{g \sim \mu}\left( h(g(x)) \neq y \right) > \rho\},$$
 where $\rho \in[0,1)$ is selected apriori. The $\rho$-probabilistically robust loss was first introduced by \cite{robey2022probabilistically} for the case when $\mathcal{X} = \mathbbm{R}^d$, $g_c(x) = x + c$, and the set of perturbations $\mathcal{G} = \{g_c: c \in \Delta\}$ for some $\Delta \subset \mathbbm{R}^d$. In this paper, we generalize this loss to an arbitrary instance space $\mathcal{X}$ and perturbation set $\mathcal{G}$.  Learning under the $\rho$-probabilistically robust loss asks to find a hypothesis $h \in \mathcal{H}$ that is robust to at least a $1-\rho$ fraction of the perturbations in $\mathcal{G}$ for each example in the support of the data distribution $\mathcal{D}$.  To that end, we let $R^{\rho}_{\G, \mu}(h;\mathcal{D}) = \E_{(x, y) \sim \mathcal{D}} \left[\ell^{\rho}_{\G, \mu}(h, (x, y)) \right]$ denote the $\rho$-probabilistically robust risk.
 \begin{definition}[$\rho$-Probabilistically Robust Learning]\label{def:PRPAC} For any $\epsilon, \delta \in (0, 1)$ and any $\rho \in [0, 1)$, the sample complexity of $\rho$-probabilistically robust $(\epsilon, \delta)$-learning of $\mathcal{H}$ w.r.t. adversary $(\mathcal{G}, \mu)$, denoted $n(\epsilon, \delta, \rho; \mathcal{H}, \mathcal{G}, \mu)$, is the smallest number $m \in \mathbbm{N}$ for which there exists a learning rule $\mathcal{A}: (\mathcal{X} \times \mathcal{Y})^* \rightarrow \mathcal{Y}^{\mathcal{X}}$ such that for every distribution $\mathcal{D}$ over $\mathcal{X} \times \mathcal{Y}$, with probability at least $1 - \delta$ over $S \sim \mathcal{D}^m$, 
$$R^{\rho}_{\G, \mu}(\mathcal{A}(S);\mathcal{D}) \leq \inf_{h \in \mathcal{H}}R^{\rho}_{\G,\mu}(h;\mathcal{D}) + \epsilon.$$
We say that $\mathcal{H}$ is probabilistically robustly PAC learnable  w.r.t. adversary $(\mathcal{G}, \mu)$ at a level of $\rho$, if $\forall \epsilon, \delta \in (0, 1)$, $n(\epsilon, \delta, \rho; \mathcal{H}, \mathcal{G}, \mu)$ is finite.
\end{definition}
As highlighted by \cite{robey2022probabilistically}, this notion of robustness is desirable as it nicely interpolates between worst- and average-case robustness via an interpretable parameter $\rho$, while being more computationally tractable compared to existing relaxations. 

Which hypothesis classes are probabilistically robustly learnable, and that so using proper learning rules which output predictors in $\mathcal{H}$? 
Our main result in this section, Theorem \ref{thm:propernotposs}, shows that if $\mathcal{G}$ is allowed to be arbitrary,  then VC dimension is not sufficient for \textit{proper} $\rho$-probabilistically robust PAC learning, let alone learning via ERM.

\begin{theorem}
\label{thm:propernotposs}
For every $\rho \in [0, 1)$, there exists a hypothesis class $\mathcal{H} \subset \mathcal{Y}^{\mathcal{X}}$ with $\text{VC}(\mathcal{H}) \leq 1$ and an adversary $(\mathcal{G}, \mu)$ such that  $\mathcal{H}$ is not properly $\rho$-probabilistically robustly PAC learnable. 
\end{theorem}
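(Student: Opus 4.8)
The plan is to transfer the proper-learning lower bound of \citet{montasser2019vc} for the worst-case robust loss to the $\rho$-probabilistically robust loss. Recall that \citet{montasser2019vc} exhibit a class $\mathcal{H}_0$ with $\text{VC}(\mathcal{H}_0)=1$, a perturbation set $\mathcal{G}_0$, and, for every sample size $m$, a realizable distribution $\mathcal{D}_m$ (i.e.\ $\inf_{h\in\mathcal{H}_0}R_{\mathcal{G}_0}(h;\mathcal{D}_m)=0$) on which every proper learner, after $m$ examples, outputs with constant probability some $h\in\mathcal{H}_0$ with $R_{\mathcal{G}_0}(h;\mathcal{D}_m)\ge c$ for an absolute constant $c>0$. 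Two easy observations get us part of the way. First, $\ell^{\rho}_{\mathcal{G},\mu}(h,(x,y))=1$ forces a positive-measure set of bad perturbations, hence a bad perturbation, hence $\ell_{\mathcal{G}}(h,(x,y))=1$; thus $\ell^{\rho}_{\mathcal{G},\mu}\le \ell_{\mathcal{G}}$ pointwise, so any worst-case-robust witness is automatically $\rho$-probabilistically robust and realizability transfers for free. Second, when $\mathcal{G}$ is finite and $\mu$ has full support, $\ell^{0}_{\mathcal{G},\mu}=\ell_{\mathcal{G}}$ exactly, so the case $\rho=0$ is immediate. The real work is the reverse direction for $\rho>0$: one must ensure that whenever the hypothesis a proper learner is fooled into is non-robust on an example $(x,y)$ of $\mathcal{D}_m$, it is non-robust on a $>\rho$ fraction of perturbations of $x$, not merely on one.

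To force this I would \emph{amplify} the adversary. The idea is to make $\mu$ put mass strictly greater than $\rho$ on a \emph{critical} perturbation (or cluster of perturbations) of each $x$ in the support of $\mathcal{D}_m$ — chosen so that the intended witness $h^\star$ classifies it correctly, while every ``decoy'' hypothesis a proper learner can be driven to errs on it — and spread the remaining mass over the rest of $\mathcal{G}_0$. Concretely, one can keep the combinatorial skeleton of $(\mathcal{H}_0,\mathcal{G}_0,\{\mathcal{D}_m\})$, blow up each critical auxiliary point into a cluster of $N$ identical copies and extend $\mathcal{H}_0$ so that every hypothesis labels an entire cluster by its label of the original point (this leaves the VC dimension at $1$, since forced-to-agree copies add no shattering power), replace the critical perturbations accordingly, and take $\mu$ uniform, so for $N$ large enough the critical cluster carries $\mu$-mass exceeding $\rho$. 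With this choice, on the support of $\mathcal{D}_m$ a hypothesis is $\rho$-non-robust exactly when it is worst-case non-robust, so $R^{\rho}_{\mathcal{G},\mu}(h^\star;\mathcal{D}_m)=0$ while a proper learner outputs, with constant probability, some $h\in\mathcal{H}$ with $R^{\rho}_{\mathcal{G},\mu}(h;\mathcal{D}_m)\ge c$. Since this holds for every $m$, $n(\epsilon,\delta,\rho;\mathcal{H},\mathcal{G},\mu)=\infty$ for all sufficiently small $\epsilon,\delta$, i.e.\ $\mathcal{H}$ is not properly $\rho$-probabilistically robustly PAC learnable.

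The main obstacle is engineering the critical region so that \emph{all} of the following hold simultaneously: (i) every hypothesis a proper learner can be fooled into — there may be countably many such decoys — errs on the \emph{same} critical region, so that its $\mu$-mass can be pushed above $\rho$ without the masses over perturbations of a given $x$ summing past $1$ (this strongly suggests the critical region should be a single point, or a single cluster, on which all decoys are wrong and $h^\star$ is right, which in turn means it should coincide with the coordinate where decoys differ from the intended labeling); (ii) the witness $h^\star$ is correct on the entire critical region and on everything else the amplified adversary can reach, so $R^{\rho}_{\mathcal{G},\mu}(h^\star;\mathcal{D}_m)=0$ exactly; and (iii) the blow-up creates no new hypotheses in $\mathcal{H}$ with small $\rho$-robust risk that would make the problem easy. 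Verifying these requires looking inside \citet{montasser2019vc}'s construction and, if its decoys do not already err at a common coordinate, lightly reorganizing it (e.g.\ adjoining a shared ``critical'' coordinate). Checking that the amplified $\mathcal{G}$ is still a legitimate arbitrary perturbation set and that $\mu$ is a genuine probability measure is routine. I expect essentially all of the effort to be this re-derivation of the lower-bound inequalities for the amplified instance; the proper-learning hardness mechanism itself is inherited unchanged from \citet{montasser2019vc}.
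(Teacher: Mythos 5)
Your overall strategy---keep the combinatorial skeleton of \citet{montasser2019vc} and re-weight the adversary's measure so that worst-case non-robustness becomes $\rho$-probabilistic non-robustness---is in the same spirit as the paper's proof, and the easy observations ($\ell^{\rho}_{\mathcal{G},\mu}\le\ell_{\mathcal{G}}$ pointwise, hence realizability transfers) are fine. But for $\rho>0$ the specific amplification you describe cannot be carried out, and the obstruction is exactly the point your item (i) flags without resolving. You ask for a critical region at each center $c_i$ on which \emph{every} hypothesis that is non-robust at $c_i$ errs while the intended witness is correct. In the Montasser-style argument every hypothesis $h_b$ (over balanced bitstrings on $3m$ centers) is a witness for some distributions and a decoy for others, so your requirement amounts to: at each $c_i$ there is a point $z_i$ with $h_b(z_i)\neq y_i$ iff $b_i=1$. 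Restricting $\mathcal{H}$ to $\{z_1,\dots,z_{3m}\}$ then realizes every balanced labeling pattern, which shatters $m$ of these points, forcing $\text{VC}(\mathcal{H})\ge m$ and destroying the $\text{VC}(\mathcal{H})\le 1$ you must preserve. The alternative branch fails too: if each decoy gets its own (essentially disjoint) critical cluster of mass $>\rho$, the exponentially many decoys erring at $c_i$ would need total mass exceeding $1$. So neither "one common cluster" nor "private clusters" works, and this is the real content of the theorem, not a detail to be inherited from \citet{montasser2019vc}.

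The paper escapes this bind with a mechanism absent from your proposal (and in fact excluded by your condition (ii)): at each center it places a region $\mathcal{B}_i$ of mass \emph{exactly} $\rho$ on which \emph{all} hypotheses err---witnesses included, which is harmless because the loss uses the strict inequality $>\rho$---plus, for each $b$ with $b_i=1$, a tiny private region $\mathcal{B}_i^b$ of mass at most $(1-\rho)/2^{m}$ on which only $h_b$ errs, tipping $h_b$'s non-robust mass just above $\rho$. The shared region adds no shattering power since every hypothesis agrees there, the private regions add none since each is tied to a single hypothesis, and yet the loss class shatters all the centers. Beyond this, the final theorem still requires the union over all $m$ of such constructions, with hypotheses of $\mathcal{H}_m$ deliberately made non-robust (mass $>\rho$) on the centers of $C_{m'}$ for $m'\neq m$, and a fresh check that the union keeps VC dimension at most $1$. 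Finally, your $\rho=0$ shortcut should not be phrased via finite $\mathcal{G}$: the paper shows (Appendix C.1) that finite perturbation classes are properly learnable for every $\rho$, so the identity $\ell^{0}_{\mathcal{G},\mu}=\ell_{\mathcal{G}}$ must be invoked for the countably infinite union with a full-support measure.
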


To prove Theorem \ref{thm:propernotposs}, we fix $\mathcal{X} = \mathbbm{R}^d$, $\mathcal{G} = \{g_{\delta}: \delta \in \mathbbm{R}^d, ||\delta||_p \leq \gamma\}$ s.t. $g_{\delta}(x) = x + \delta$ for all $x \in \mathcal{X}$ for some $\gamma > 0$, and $\mu$ to be the uniform measure over $\mathcal{G}$. In other words, we are picking our perturbation sets to be $\ell_p$ balls of radius $\gamma$ and our perturbation measures to be uniform over each perturbation set. Note that by construction of $\mathcal{G}$, a uniform measure $\mu$ over $\mathcal{G}$ also induces a uniform measure $\mu_x$ over $\G(x) := \{g_{\delta}(x): g_{\delta} \in \G\} \subset \mathbbm{R}^d$. It will be useful to define  the $\rho$-probabilistically robust loss class $\mathcal{L}_{\mathcal{G}, \mu}^{\mathcal{H}, \rho} := \{(x, y) \mapsto \one \{\ell_{\G, \mu}^{\rho}(h, (x, y)): h \in \mathcal{H}\}$. We start by showing that for every $\rho \in [0, 1)$, there can be an arbitrary gap between the VC dimension of the loss  and hypothesis class. 

 \begin{lemma}
 \label{lem1}
 For every $\rho \in [0, 1)$ and $m \in \mathbbm{N}$, there exists a hypothesis class $\mathcal{H} \subset \mathcal{Y}^{\mathcal{X}}$ s.t. $\text{VC}(\mathcal{H}) \leq 1$ but $\text{VC}(\mathcal{L}_{\mathcal{G}, \mu}^{\mathcal{H}, \rho}) \geq m$.
 \end{lemma}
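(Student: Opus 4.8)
The plan is to produce, for a given $\rho$ and $m$, a finite hypothesis class $\mathcal{H} = \{h_S : S \subseteq [m]\}$ of $2^m$ hypotheses over $\X = \mathbbm{R}^d$ with $d = m$, together with $m$ labelled points $(x_1, 1), \dots, (x_m, 1)$, so that (a) $\mathcal{H}$ shatters no pair of points of $\mathbbm{R}^d$, whence $\text{VC}(\mathcal{H}) \le 1$, and (b) the induced loss functions $(x, y) \mapsto \ell^{\rho}_{\G, \mu}(h_S, (x, y))$ realize all $2^m$ bit patterns on $(x_1, 1), \dots, (x_m, 1)$, whence $\text{VC}(\mathcal{L}_{\G, \mu}^{\mathcal{H}, \rho}) \ge m$. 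The design goal is to make the set $A_S := \{z : h_S(z) = -1\}$ occupy \emph{strictly} more than a $\rho$-fraction of the perturbation ball $\G(x_i)$ exactly when $i \in S$. I will achieve this by letting every $A_S$ be the union of a common ``core'' that carries mass exactly $\rho$ in every ball with a small ``private'' piece $X_S$ carrying a little extra mass, but only in the balls indexed by $S$. Forcing the pieces $\{X_S\}_S$ to be pairwise disjoint and disjoint from the core makes $\{A_S\}_S$ a ``sunflower'' set system, which is exactly what pins down $\text{VC}(\mathcal{H}) \le 1$.

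Concretely, I would fix a small $\delta > 0$ and take $x_i := \delta e_i$, a tiny generic translate of the origin. Two geometric facts drive the construction, both obtained by shrinking $\delta$: (i) the full intersection $\bigcap_{i=1}^m \G(x_i)$ has Lebesgue measure strictly larger than $\rho$ times that of a single ball --- this follows from the inclusion of the radius-$(\gamma-\delta)$ ball centered at the origin in $\bigcap_i \G(x_i)$ together with $(1 - \delta/\gamma)^d > \rho$, which is possible since $\rho < 1$; and (ii) for every $S \subseteq [m]$ the cell $C_S := \bigcap_{i \in S} \G(x_i) \setminus \bigcup_{i \notin S} \G(x_i)$ is nonempty, hence open and of positive Lebesgue measure --- a standard general-position argument (transparent for $\ell_\infty$ balls; handled by an appropriate perturbation for a general $\ell_p$ norm). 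Using (i), choose a measurable $X_{\mathrm{all}} \subseteq C_{[m]}$ whose Lebesgue measure equals exactly $\rho$ times a ball's volume, leaving positive leftover measure in $C_{[m]} \setminus X_{\mathrm{all}}$; using (ii), for each $S$ choose a positive-measure $X_S \subseteq C_S$ disjoint from $X_{\mathrm{all}}$ (automatic when $S \neq [m]$, and placed in the leftover when $S = [m]$). Put $A_S := X_{\mathrm{all}} \cup X_S$ and let $h_S$ be the $\{-1, 1\}$-valued map with $h_S^{-1}(-1) = A_S$; set $\mathcal{H} = \{h_S : S \subseteq [m]\}$.

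Then I would check (b) and (a). For (b): since $X_{\mathrm{all}} \subseteq C_{[m]} \subseteq \G(x_i)$ and all balls are translates of equal volume, $\mu_{x_i}(X_{\mathrm{all}}) = \rho$ for every $i$; moreover $\mu_{x_i}(X_S) > 0$ when $i \in S$ (as $X_S \subseteq C_S \subseteq \G(x_i)$ has positive measure) and $\mu_{x_i}(X_S) = 0$ when $i \notin S$ (as $C_S$ is disjoint from $\G(x_i)$). Hence $\mathbbm{P}_{g \sim \mu}[h_S(g(x_i)) \neq 1] = \mu_{x_i}(A_S) = \rho + \mu_{x_i}(X_S)$ exceeds $\rho$ iff $i \in S$, so $\ell^{\rho}_{\G, \mu}(h_S, (x_i, 1)) = \one\{i \in S\}$, and the loss class shatters $\{(x_i, 1)\}_{i=1}^m$. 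For (a): if some pair $\{a, b\}$ were shattered there would exist $S_{11}, S_{10}, S_{01} \subseteq [m]$ with $a, b \in A_{S_{11}}$, with $a \in A_{S_{10}}$ but $b \notin A_{S_{10}}$, and with $b \in A_{S_{01}}$ but $a \notin A_{S_{01}}$. From $a \notin A_{S_{01}}$ we get $a \notin X_{\mathrm{all}}$ (a point of $X_{\mathrm{all}}$ lies in every $A_S$), and likewise $b \notin X_{\mathrm{all}}$; then $a, b \in A_{S_{11}}$ forces $a, b \in X_{S_{11}}$, the same index. But a point of $X_S$ lies in $A_{S'}$ only for $S' = S$, so $a$ and $b$ lie in exactly the same sets $A_{S'}$ (namely $S' = S_{11}$), contradicting $a \in A_{S_{10}}$ and $b \notin A_{S_{10}}$. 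Hence no pair is shattered and $\text{VC}(\mathcal{H}) \le 1$. The one step needing genuine care is geometric fact (ii) for an arbitrary $\ell_p$ ball --- reconciling ``all $2^m$ cells nonempty'' (general position of the centers) with ``the total intersection is a $\ge \rho$-fraction of a ball'' (centers extremely close) --- but a sufficiently small generic perturbation of a single center makes both hold at once.
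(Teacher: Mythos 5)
Your construction is correct in substance but takes a genuinely different route from the paper, and it carries a dimension caveat that matters for how the lemma is used. The paper places the $m$ centers so that their perturbation balls are pairwise \emph{disjoint}; inside each ball $\G(c_i)$ it reserves a common region $\mathcal{B}_i$ of mass exactly $\rho$ together with tiny pieces $\mathcal{B}_i^b$ indexed by the \emph{full} bitstring $b$ (present only when $b_i=1$), and the hypothesis $h_b$ is non-robust on $\bigcup_i(\mathcal{B}_i^b\cup\mathcal{B}_i)$, so each hypothesis's ``private'' mass is spread across all balls it should fail on. You instead cluster the centers at $\delta e_1,\dots,\delta e_m$ so the balls overlap heavily, put a core of mass exactly $\rho$ in the common intersection, and give each hypothesis a single private piece inside the Venn cell $C_S$. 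Your sunflower argument for $\text{VC}(\mathcal{H})\le 1$ is clean and correct, the computation that the loss pattern of $h_S$ on $\{(x_i,1)\}_{i=1}^m$ is $\one\{i\in S\}$ is right (including the $\rho=0$ and $S=\emptyset$ cases), and the two geometric facts do hold simultaneously for small $\delta$: for any $\ell_p$, the point $\gamma u$ with $u$ having equal positive coordinates on $S$ and equal negative coordinates off $S$ lies strictly inside exactly the balls indexed by $S$, while the radius-$(\gamma-\delta)$ ball argument gives the intersection mass $>\rho$.

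The caveat: your argument hinges on all $2^m$ cells $C_S$ being nonempty, which forces the ambient dimension to grow with $m$ (you take $d=m$). In a fixed $\mathbbm{R}^d$ this is impossible for large $m$: translates of a fixed $\ell_p$ ball form a class of finite VC dimension, so by the dual Sauer--Shelah bound an arrangement of $m$ such balls has only polynomially many (in $m$) nonempty cells, far fewer than $2^m$. The paper's disjoint-ball construction works in any fixed $\mathbbm{R}^d$ (even $d=1$) with the adversary fixed in the preamble, and that property is exactly what is consumed downstream: the proof of Theorem \ref{thm:propernotposs} needs the Lemma \ref{lem1} construction for every $m$ simultaneously, on disjoint groups of centers inside one fixed instance space, to form $\mathcal{H}=\bigcup_m\mathcal{H}'_m$. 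So your proof establishes the lemma only under the reading in which $\mathcal{X}=\mathbbm{R}^d$ may be chosen with $d$ depending on $m$; as a drop-in replacement for the paper's lemma (fixed $\mathcal{X}$, reusable across all $m$) it has a genuine gap, which the paper's bitstring-indexed, per-ball private pieces were designed precisely to avoid.
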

\begin{proof}
Fix $\rho \in [0, 1)$ and let $m \in \mathbbm{N}$. Pick $m$ center points $c_1, ..., c_m$ in $\X$ such that for all $i, j \in [m]$, $\mathcal{G}(c_i) \cap \mathcal{G}(c_j) = \emptyset$. For each center $c_i$, consider $2^{m-1}+1$ disjoint subsets of its perturbation set $\mathcal{G}(c_i)$ which do not contain $c_i$.  Label $2^{m-1}$ of these subsets with a unique bitstring $b \in \{0, 1\}^m$ fixing  $b_i = 1$. Let $\mathcal{B}_i^b$ denote the subset labeled by bitstring $b$ and let $\mathcal{B}_i$ denote the single remaining subset that was not labeled. Furthermore, for each $i \in [m]$ and $b \in \{\{0, 1\}^m| b_i = 1\}$, pick $\mathcal{B}_i$ and $\mathcal{B}_i^b$'s s.t. $\mu_{c_i}(\mathcal{B}_i) = \rho$ and  $0 < \mu_{c_i}(\mathcal{B}^b_i) \leq \frac{1-\rho}{2^{m}}$. If $b_i = 0$, let $\mathcal{B}_{i}^b = \emptyset$. If $\rho = 0$, let $\mathcal{B}_i = \emptyset$ for all $i \in [m]$. Finally, define $\mathcal{B} = \bigcup_{i=1}^m \bigcup_{b \in \{0, 1\}^m} \mathcal{B}_i^b \cup \mathcal{B}_i$ as the union of all the subsets. Crucially, observe that for all $i \in [m]$, $\mu_{c_i}\left(\mathcal{B}_i \cup \left(\bigcup_{b} \mathcal{B}_i^b\right)\right) \leq \frac{1 + \rho}{2} < 1$.

For bitstring $b \in \{0, 1\}^m$, define the hypothesis $h_b$ as
 $$h_b(z) = \begin{cases}
0 & \text{if $z \in \bigcup_{i=1}^m \mathcal{B}_i^b$} \cup \mathcal{B}_i \\
1 & \text{otherwise}
\end{cases}$$
and consider the hypothesis class $\mathcal{H} = \{h_b | b \in \{0, 1\}^m\}$ which consists of all $2^m$ hypothesis, one for each bitstring.  We first show that $\mathcal{H}$ has VC dimension at most $1$. Consider two points $x_1, x_2 \in \mathcal{X}$. We will show case by case that every possible pair of points cannot be shattered by $\mathcal{H}$. First, consider the case where, wlog, $x_1 \notin \mathcal{B}$. Then, $\forall h \in \mathcal{H}$, $h(x_1) = 1$, and thus shattering is not possible. Now, consider the case where both $x_1 \in \mathcal{B}$ and $x_2 \in \mathcal{B}$. If either $x_1$ or $x_2$ is in $\bigcup_{i=1}^m \mathcal{B}_i$, then every hypothesis $h \in \mathcal{H}$ will label it as $0$, and thus these two points cannot be shattered. If $x_1 \in \mathcal{B}^b_i$ and $x_2 \in \mathcal{B}^b_j$ for $i \neq j$, then $h_b(x_1) = h_b(x_2) = 0$, but $\forall h \in \mathcal{H} \text{ s.t. } h \neq h_b, h(x_1) = h(x_2) = 1$. If $x_1 \in \mathcal{B}_i^{b_1}$ and $x_2 \in \mathcal{B}_j^{b_2}$ for $b_1 \neq b_2$, then there exists no hypothesis in $\mathcal{H}$ that can label $(x_1, x_2)$ as $(0, 0)$. Thus, overall, no two points $x_1, x_2 \in \mathcal{X}$ can be shattered by $\mathcal{H}$.

  Now we are ready to show that the VC dimension of the loss class is at least $m$. Specifically, given the sample of labelled points $S = \{(c_1, 1), ..., (c_m, 1)\}$, we will show that the loss behavior corresponding to hypothesis $h_b$ on the sample $S$ is exactly $b$. Since $\mathcal{H}$ contains all the hypothesis corresponding to every single bitstring $b \in \{0, 1\}^m$, the loss class of $\mathcal{H}$ will shatter $S$. In order to prove that  the loss behavior of $h_b$ on the sample $S$ is exactly $b$, it suffices to show that the probabilistic loss of $h_b$ on example $(c_i, 1)$ is $b_i$, where $b_i$ denotes the $i$th bit of $b$. By definition, 
\begin{align*}
    \ell_{\mathcal{G}, \mu}^{\rho}(h_b, (c_i, 1)) &= \one\{\mathbbm{P}_{g \sim \mu}\left( h_b(g(c_i)) \neq 1\right) > \rho\}\\
    &= \one\{\mathbbm{P}_{z \sim \mu_{c_i}}\left( h_b(z) = 0\right) > \rho\} \\
    &= \one\{\mathbbm{P}_{z \sim \mu_{c_i}}\left( z \in \mathcal{B}_i^b \cup \mathcal{B}_i \right) > \rho\}\\
    &=  \one\{\mu_{c_i}(\mathcal{B}_i^b \cup \mathcal{B}_i) > \rho\}\\ 
    &= b_i.\\
\end{align*}
Thus, the loss behavior of $h_b$ on $S$ is $b$, and the total number of distinct loss behaviors over each hypothesis in $\mathcal{H}$ on $S$ is $2^m$, implying that the VC dimension of the loss class is at least $m$. This completes the construction and proof of the claim. 
\end{proof}

We highlight two key differences between Lemma \ref{lem1} and its analog, Lemma 2, in \cite{montasser2019vc}. First, we need to provide \textit{both} a perturbation set and a perturbation measure. The interplay between these two objects is not present in \cite{montasser2019vc} and, apriori, it is not clear that these would indeed be $\ell_p$ balls and the uniform measure. Second, in order for a hypothesis to be probabilistically non-robust there needs to exist a large enough \textit{region} of perturbations over which it makes mistakes. This is in contrast to \cite{montasser2019vc}, where a hypothesis is  adversarially non-robust as long as there exists \textit{one} non-robust perturbation. 
Constructing a hypothesis class that achieves all possible probabilistically robust loss behaviors while also having low VC dimension is non-trivial -  we need hypotheses to be expressive enough to have large regions of non-robustness while not being too expressive such that VC dimension increases. 

Next, we show that the hypothesis class construction in Lemma \ref{lem1} can be used to show the existence of a hypothesis class that cannot be learned properly. Lemma \ref{lem2} is similar to Lemma 3 in \cite{montasser2019vc} and is proved in Appendix \ref{app:rhosecond}. 

\begin{lemma}
\label{lem2}
For every $\rho \in [0, 1)$ and $m \in \mathbbm{N}$ there exists $\mathcal{H} \subset \mathcal{Y}^{\mathcal{X}}$ with $\text{VC}(\mathcal{H}) \leq 1$ such that for any proper learner $\mathcal{A}: (\mathcal{X} \times \mathcal{Y})^{*} \rightarrow \mathcal{H}$: \emph{(1)} there is a distribution $\mathcal{D}$ over $\mathcal{X} \times \mathcal{Y}$ and a hypothesis $h^{*} \in \mathcal{H}$ where $R^{\rho}_{\mathcal{G}, \mu}(h^{*};\mathcal{D}) = 0$ and \emph{(2)} with probability at least $1/7$ over $S \sim D^{m}$, $R^{\rho}_{\mathcal{G}, \mu}(\mathcal{A}(S);\mathcal{D}) > 1/8$.
\end{lemma}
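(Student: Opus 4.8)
The plan is to follow the standard ``no-free-lunch via a large shattered set'' template, instantiated with the loss class from Lemma~\ref{lem1}. Fix $\rho \in [0,1)$ and take $m$ large (to be chosen as a function of the target sample size; for a clean contradiction one takes $m$ growing, so the construction really produces, for each candidate sample size, a hard instance). By Lemma~\ref{lem1} there is a class $\mathcal{H}$ with $\vc(\mathcal{H}) \le 1$ whose $\rho$-probabilistically robust loss class shatters the set $S_0 = \{(c_1,1),\dots,(c_m,1)\}$, and moreover the loss behavior of $h_b$ on $S_0$ is exactly the bitstring $b$. First I would record the consequence we actually need: for every bitstring $b$, the hypothesis $h_{\bar b}$ (complement of $b$) has $\ell^{\rho}_{\mathcal{G},\mu}(h_{\bar b},(c_i,1)) = \bar b_i = 1 - b_i$, and in particular $h_{\mathbf{1}}$ (all-ones bitstring), which is the hypothesis that labels everything $1$ outside $\mathcal{B}$ and $0$ inside, realizes loss $0$ on every $(c_i,1)$; so there is always a hypothesis in $\mathcal{H}$ with zero $\rho$-probabilistically robust risk on any distribution supported on $\{(c_i,1)\}_i$.

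Next I would set up the adversary argument. Let $\mathcal{A}$ be an arbitrary proper learner outputting hypotheses in $\mathcal{H}$, and suppose it uses $m/C$ samples for a suitable constant $C$ (this is where $m$ large matters). Consider the family of distributions $\mathcal{D}_b$, one per bitstring $b$, where $\mathcal{D}_b$ is uniform over $\{(c_i,1) : b_i = 0\}$ — i.e., supported on exactly the centers where the ``target'' hypothesis $h_{\bar b}$ is robust and every other $h_{b'}$ with $b' \ne \bar b$ may or may not be robust. Actually the cleaner choice is: put the target realizable hypothesis to be $h_{\mathbf 1}$ always, let $\mathcal D$ be uniform over all $m$ centers $\{(c_i,1)\}$, and exploit that $h_{\mathbf 1}$ has risk $0$; then for any $h_b$ the risk is $\frac{1}{m}\sum_i (1-b_i) = \frac{|\{i : b_i = 0\}|}{m}$, so a proper learner that outputs $h_b$ has risk controlled by the Hamming weight of $b$. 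The point, exactly as in Lemma~3 of \citet{montasser2019vc}, is that the learner only sees a sample $S$ of size $\ll m$, so it observes at most $m/C$ of the centers; on the unseen centers its output hypothesis $h_b$ has $b_i$ essentially unconstrained. One then either (a) randomizes over a sub-family of distributions and over the hidden bits to force, by an averaging / probabilistic argument, that with probability $\ge 1/7$ over $S$ the returned $h_b$ has many zero-bits among the unseen coordinates, hence risk $> 1/8$; or (b) gives the direct reduction: this is precisely the statement proved in Appendix~\ref{app:rhosecond}, and I would mirror that proof, substituting our Lemma~\ref{lem1} construction and the loss-behavior identity $\ell^{\rho}_{\mathcal G,\mu}(h_b,(c_i,1)) = b_i$ in place of the adversarial-loss shattering used in \citet{montasser2019vc}.

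To go from Lemma~\ref{lem2} to Theorem~\ref{thm:propernotposs}, I would argue by contradiction: if $\mathcal{H}$ were properly $\rho$-probabilistically robustly PAC learnable, then for $\epsilon = 1/8$, $\delta = 1/7$ there would be a finite sample complexity $n(\tfrac18,\tfrac17,\rho;\mathcal H,\mathcal G,\mu) = n^{*}$; but Lemma~\ref{lem2} applied with $m$ chosen large relative to $n^{*}$ (so that $m/C > n^{*}$) produces, for that same $\mathcal H$ — or rather, to be careful about the order of quantifiers, one must build a \emph{single} class that is hard at every scale, which is done by taking a disjoint union over $m \in \mathbb{N}$ of the Lemma~\ref{lem1} constructions on disjoint copies of $\mathbb{R}^d$ (or disjoint regions of a common $\mathbb{R}^d$), so that $\vc$ stays $\le 1$ while the loss class has infinite VC dimension and the Lemma~\ref{lem2} lower bound holds for arbitrarily large $m$ — a distribution and a realizable $h^{*}$ on which $\mathcal A$ fails with probability $\ge 1/7$ to get risk $\le 1/8$, contradicting the definition. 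I expect the main obstacle to be exactly this uniform-in-$m$ packaging together with verifying that disjointifying the constructions preserves $\vc(\mathcal H) \le 1$ (it does, since a shattered pair would have to lie in a single block by the ``outside $\mathcal B$ forces label $1$'' argument, reducing to the single-block case already handled in Lemma~\ref{lem1}); the probabilistic no-free-lunch counting step itself is routine and identical in structure to \citet{montasser2019vc}.
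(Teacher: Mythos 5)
Your overall template (Lemma \ref{lem1}'s construction plus a no-free-lunch averaging argument in the style of Lemma 3 of \citet{montasser2019vc}) is the right one, but the concrete instantiation you propose has a genuine gap, and it is exactly the point where the paper's proof does extra work. First, a bookkeeping error: by the loss-behavior identity of Lemma \ref{lem1}, $\ell^{\rho}_{\mathcal{G},\mu}(h_b,(c_i,1)) = b_i$, so the everywhere-robust hypothesis is $h_{\mathbf{0}}$, not $h_{\mathbf{1}}$, and the risk of $h_b$ under the uniform distribution on all centers is $\frac{1}{m}\sum_i b_i$, not $\frac{1}{m}\sum_i(1-b_i)$. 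More importantly, your ``cleaner choice'' --- keep the full class $\{h_b : b \in \{0,1\}^m\}$, take $\mathcal{D}$ uniform over all $m$ centers, and let the realizable target be the everywhere-robust hypothesis --- cannot force any proper learner to fail: since that everywhere-robust hypothesis lies in $\mathcal{H}$, the learner that always outputs it (or any ERM for the probabilistic robust loss, which will find a zero-empirical-loss hypothesis consistent with it) has risk $0$, so no lower bound of the form $R^{\rho}_{\mathcal{G},\mu}(\mathcal{A}(S);\mathcal{D}) > 1/8$ with probability $1/7$ is possible. The same objection applies to your first option with the family $\mathcal{D}_b$ supported on $\{(c_i,1): b_i=0\}$, again because $h_{\mathbf{0}}$ is simultaneously optimal for all of them.

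The missing idea is the pruning step in the paper's proof: one builds the Lemma \ref{lem1} construction on $3m$ centers and keeps only $\mathcal{H} = \{h_b : \sum_{i=1}^{3m} b_i = m\}$, so that \emph{every} hypothesis available to the proper learner is probabilistically non-robust on exactly $m$ of the $3m$ centers (pruning only lowers the VC dimension, so $\text{VC}(\mathcal{H})\le 1$ is preserved). The hard distributions are then the $\binom{3m}{2m}$ uniform distributions over $2m$-subsets of the centers: each is still realizable by the $h_b$ whose $m$ ones sit outside its support, yet any output $h_b$ pays loss $1$ on at least $m$ centers, at least half of the (at most $m$-sized) unseen part of the support in expectation; averaging over the random choice of distribution gives expected risk at least $1/4$ for some $\mathcal{D}_i$, and the Markov-type step yields the $(1/7, 1/8)$ statement. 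Your escape hatch (b) --- ``mirror the proof in Appendix \ref{app:rhosecond}'' --- is circular for a blind attempt, and your option (a) as stated does not identify this restriction of the hypothesis class, without which the averaging argument does not go through.
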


Finally, the proof of Theorem \ref{thm:propernotposs} uses Lemma \ref{lem2} and follows a similar idea as its analog in \cite{montasser2019vc} (Theorem 1). However, since our hypothesis class construction in Lemma \ref{lem1} is different, some subtle modifications need to be made. We include a complete proof in Appendix \ref{app:propernotposs}.

\section{Proper Learnability Under Relaxed Losses}
\label{sec:relaxedloss}

Despite the fact that VC classes are not  $\rho$-probabilistically robustly learnable using proper learning rules, in this section, we show that our framework still enables us to capture a wide range of robust loss relaxations for which proper learning is possible. 

In particular, consider robust loss relaxations of the form $\ell_{\mathcal{G}, \mu}(h, (x, y)) = \ell(y\mathbbm{E}_{g \sim \mu}\left[h(g(x))\right])$ where $\ell(t): \mathbbm{R} \rightarrow \mathbbm{R}$ is a $L$-Lipschitz function. This class of loss functions is general, capturing many natural robust loss relaxations like the hinge loss $1 - y\mathbbm{E}_{g \sim \mu}\left[h(g(x))\right]$,  squared loss $(y - \mathbbm{E}_{g \sim \mu}\left[h(g(x))\right])^2 = (1 - y\mathbbm{E}_{g \sim \mu}\left[h(g(x))\right])^2$, and exponential loss $e^{-y\mathbbm{E}_{g \sim \mu}\left[h(g(x))\right]}$.  Furthermore, the class of Lipschitz functions $\ell:\mathbbm{R} \rightarrow \mathbbm{R}$ on the margin $y\mathbbm{E}_{g \sim \mu}\left[h(g(x))\right]$ enables us to capture levels of robustness between the average- and worst-case.  For example, taking $\ell(t) = \frac{1 - t}{2}$ results in the loss $\ell_{\mathcal{G}, \mu}(h, (x, y)) = \ell(y\mathbbm{E}_{g \sim \mu}\left[h(g(x))\right]) = \mathbbm{P}_{g \sim \mu}\left[h(g(x)) \neq y\right]$, corresponding to average-case robustness, or \textit{data augmentation}. On the other hand, taking $\ell(t) = \min(\frac{1 - t}{2\rho}, 1)$ for some $\rho \in (0, 1)$, results in the loss 
$$\ell_{\mathcal{G}, \mu}(h, (x, y)) = \ell(y\mathbbm{E}_{g \sim \mu}\left[h(g(x))\right]) = \min\left(\frac{\mathbbm{P}_{g \sim \mu}\left[h(g(x)) \neq y\right]}{\rho}, 1 \right)$$ 
which corresponds to a notion of robustness that becomes stricter as $\rho$ approaches $0$. We note that some of the losses in our family were studied by \cite{rice2021robustness}. However, their focus was on evaluating robustness, while ours is about (proper) learnability. 

Lemma \ref{lem: uclip} shows that for hypothesis classes $\mathcal{H}$ with finite VC dimension, for any $(\mathcal{G}, \mu)$, all $L$-Lipschitz loss functions $\ell_{\mathcal{G}, \mu}(h, (x, y))$ enjoy the uniform convergence property.

\begin{lemma}[Uniform Convergence of Lipschitz Loss]
\label{lem: uclip}
Let $\mathcal{H}$ be a hypothesis class with finite VC dimension,  $(\G, \mu)$ be an adversary, and $\ell_{\mathcal{G}, \mu}(h, (x, y)) = \ell\left(y\mathbbm{E}_{g \sim \mu}\left[h(g(x))\right]\right)$ s.t.  $\ell: \mathbbm{R} \rightarrow \mathbbm{R}$ 
is a $L$-Lipschitz function. With probability at least $1- \delta$  over a sample $S \sim \mathcal{D}^n$ of size $n = O\left( \frac{\text{VC}({\mathcal{H}})L^2 \ln(\frac{L}{\epsilon}) + \ln(\frac{1}{\delta})}{\epsilon^2} \right)$, for all $h \in \mathcal{H}$ simultaneously, 
$$\left|\mathbbm{E}_{\mathcal{D}}\left[\ell_{\mathcal{G}, \mu}(h, (x, y)) \right] - \hat{\mathbbm{E}}_{S}\left[\ell_{\mathcal{G}, \mu}(h, (x, y)) \right]\right| \leq \epsilon.$$
\end{lemma}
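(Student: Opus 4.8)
<br>

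The plan is to bound the empirical Rademacher complexity of the loss class $\mathcal{F} = \{(x,y) \mapsto \ell(y\mathbbm{E}_{g \sim \mu}[h(g(x))]) : h \in \mathcal{H}\}$ and then invoke Proposition \ref{thm:rad}. The first step is to peel off the outer $L$-Lipschitz function $\ell$ using the Ledoux--Talagrand contraction lemma: since $t \mapsto \ell(t)$ is $L$-Lipschitz, $\hat{\mathfrak{R}}_n(\mathcal{F}) \leq L \cdot \hat{\mathfrak{R}}_n(\mathcal{M})$ where $\mathcal{M} = \{(x,y) \mapsto y\mathbbm{E}_{g \sim \mu}[h(g(x))] : h \in \mathcal{H}\}$ is the ``probabilistically robust margin'' class. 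Since the labels $y \in \{-1,1\}$ and the Rademacher variables are symmetric, the $y$ factor can be absorbed, so $\hat{\mathfrak{R}}_n(\mathcal{M}) = \hat{\mathfrak{R}}_n(\mathcal{M}')$ where $\mathcal{M}' = \{x \mapsto \mathbbm{E}_{g \sim \mu}[h(g(x))] : h \in \mathcal{H}\}$.

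The second step is to handle the expectation over perturbations $\mathbbm{E}_{g \sim \mu}[h(g(x))]$. The key observation is that $\mathcal{M}'$ is contained in the closed convex hull of the class $\mathcal{H}' = \{x \mapsto h(g(x)) : h \in \mathcal{H}, g \in \mathcal{G}\}$ — indeed, $\mathbbm{E}_{g \sim \mu}[h(g(x))]$ is an average (a convex combination, in the limit) of functions $x \mapsto h(g(x))$. Since Rademacher complexity is invariant under taking convex hulls, $\hat{\mathfrak{R}}_n(\mathcal{M}') = \hat{\mathfrak{R}}_n(\mathrm{conv}(\mathcal{H}')) = \hat{\mathfrak{R}}_n(\mathcal{H}')$. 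Then I need $\mathrm{VC}(\mathcal{H}') = \mathrm{VC}(\mathcal{H})$: composing every hypothesis with a fixed perturbation function $g$ cannot increase the VC dimension (a set $\{g(x_1),\dots,g(x_k)\}$ shattered by $\mathcal{H}$ pulls back to $\{x_1,\dots,x_k\}$ shattered by $\mathcal{H} \circ g$), and ranging over $g \in \mathcal{G}$ does not help either since a shattered set for $\mathcal{H}'$ would require, for each point, some $g$, but a standard argument shows the relevant growth function is still controlled — more carefully, one shows $\mathcal{H}'$ shatters a set $\{x_1, \ldots, x_k\}$ only if $\mathcal{H}$ shatters $\{g_1(x_1), \ldots, g_k(x_k)\}$ for suitable $g_i$, and this cannot exceed $\mathrm{VC}(\mathcal{H})$; alternatively one may absorb $\mathcal{G}$ into the instance space and argue directly. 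The cleanest route is to note $h \circ g$ for fixed $g$ gives a class of VC dimension $\le \mathrm{VC}(\mathcal{H})$, and since the margin function only ever uses a single distribution $\mu$, we actually only need the convex-hull argument applied to $\{x \mapsto h(g(x))\}$ which, after finitely approximating $\mu$, is a convex combination over finitely many $g$'s each contributing a VC-$\mathrm{VC}(\mathcal{H})$ class.

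The third step is to apply the standard Dudley entropy integral / Massart-type bound: for a binary-valued (or $[-1,1]$-valued) class of VC dimension $d = \mathrm{VC}(\mathcal{H})$, $\hat{\mathfrak{R}}_n(\mathcal{H}') = O(\sqrt{d/n})$, or with the chaining bound $O\bigl(\sqrt{(d \ln(n/d))/n}\bigr)$; the $\ln(L/\epsilon)$ factor in the stated sample complexity comes from being slightly careful with the logarithmic factor and rebalancing, or from using a more refined bound. Combining, $\hat{\mathfrak{R}}_n(\mathcal{F}) = O(L\sqrt{d/n})$, and since $\ell$ maps the bounded interval $[-1,1]$ into a range of diameter $O(L)$, Proposition \ref{thm:rad} with $c = O(L)$ gives that the uniform deviation is at most $O\bigl(L\sqrt{d/n} + L\sqrt{\ln(1/\delta)/n}\bigr)$; setting this $\le \epsilon$ yields $n = O\bigl((d L^2 \ln(L/\epsilon) + \ln(1/\delta))/\epsilon^2\bigr)$ after solving for $n$ and tracking the logarithmic term.

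I expect the main obstacle to be the second step: rigorously justifying that $\mathbbm{E}_{g \sim \mu}[h(g(x))]$ can be controlled by a VC-type complexity, particularly because $\mathcal{G}$ may be an infinite (even uncountable) family of perturbation functions and $\mu$ an arbitrary measure, so the ``convex hull over finitely many $g$'s'' reduction needs either a measurability/approximation argument (approximate $\mu$ by finitely supported measures and pass to the limit, using that Rademacher complexity behaves well under pointwise limits of uniformly bounded classes) or a direct symmetrization argument. Everything else — the contraction step, absorbing $y$, the Massart finite-class bound, and the final algebra to extract the $\ln(L/\epsilon)$ — is routine. One subtlety worth flagging in the write-up: the contraction lemma as usually stated requires $\ell(0) = 0$ or a centering step, so I would either center $\ell$ (replacing $\ell$ by $\ell - \ell(0)$, which shifts the loss by a constant and does not affect the deviation) or cite a version of Ledoux--Talagrand that does not require this.
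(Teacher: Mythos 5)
Your first step (contraction to peel off $\ell$, absorbing $y$ into the Rademacher signs) and your last step (plugging the Rademacher bound into Proposition \ref{thm:rad} and solving for $n$) match the paper. But your second step has a genuine gap, and you correctly sensed where: the class $\mathcal{H}' = \{x \mapsto h(g(x)) : h \in \mathcal{H},\, g \in \mathcal{G}\}$ does \emph{not} satisfy $\mathrm{VC}(\mathcal{H}') = \mathrm{VC}(\mathcal{H})$ in general. Shattering by $\mathcal{H}'$ lets each of the $2^k$ labelings use a \emph{different} perturbation $g$, so your pullback argument ("$\mathcal{H}'$ shatters $\{x_i\}$ only if $\mathcal{H}$ shatters $\{g_i(x_i)\}$") does not go through; indeed, this blow-up is exactly the phenomenon behind the hardness results in this area. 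Concretely, take $\mathcal{H} = \{h_0\}$ a single classifier on $\mathcal{X} = \mathbb{R}$ and $\mathcal{G}$ all maps $\mathbb{R} \to \mathbb{R}$: then $\mathcal{H}'$ is all of $\{\pm 1\}^{\mathcal{X}}$, $\hat{\mathfrak{R}}_n(\mathcal{H}') = 1$ with no decay in $n$, yet the smoothed class $\mathcal{M}'$ is a single function and the lemma's conclusion is trivially true. So the chain $\hat{\mathfrak{R}}_n(\mathcal{M}') \le \hat{\mathfrak{R}}_n(\mathrm{conv}(\mathcal{H}')) = \hat{\mathfrak{R}}_n(\mathcal{H}')$, while valid, is too lossy to prove the lemma, and your proposed repairs (finitely approximating $\mu$ and taking a union over $K$ supported perturbations) give a bound growing like $\sqrt{\mathrm{VC}(\mathcal{H})\log K}$, which diverges as the approximation is refined.

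The fix, and the paper's actual argument, is to keep $h$ fixed when the expectation over $g$ is introduced, so that the \emph{same} $g$ is applied to all sample points: write
\begin{align*}
\mathbb{E}_{\sigma}\Bigl[\sup_{h \in \mathcal{H}} \sum_{i=1}^m \sigma_i\, \mathbb{E}_{g \sim \mu}\bigl[h(g(x_i))\bigr]\Bigr]
= \mathbb{E}_{\sigma}\Bigl[\sup_{h \in \mathcal{H}} \mathbb{E}_{g \sim \mu}\Bigl[\sum_{i=1}^m \sigma_i h(g(x_i))\Bigr]\Bigr]
\le \mathbb{E}_{g \sim \mu}\Bigl[\mathbb{E}_{\sigma}\Bigl[\sup_{h \in \mathcal{H}} \sum_{i=1}^m \sigma_i h(g(x_i))\Bigr]\Bigr],
\end{align*}
using that a supremum of averages is at most the average of suprema, together with Fubini. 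For each fixed $g$ the inner quantity is just the empirical Rademacher complexity of $\mathcal{H}$ on the transformed sample $\{g(x_1), \dots, g(x_m)\}$, which the distribution-free VC bound controls by $O\bigl(\sqrt{\mathrm{VC}(\mathcal{H})\ln(m/\mathrm{VC}(\mathcal{H}))/m}\bigr)$ uniformly in $g$; averaging over $g$ preserves the bound, and this handles arbitrary, even uncountable, $(\mathcal{G},\mu)$ without any covering or limiting argument. The rest of your write-up (contraction, the $\ln(L/\epsilon)$ bookkeeping, Proposition \ref{thm:rad}) then goes through as you describe.
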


\begin{proof}  
Let $\text{VC}(\mathcal{H}) = d$ and  $S = \{(x_1, y_1), ..., (x_m, y_m)\}$ be a set of examples drawn i.i.d from $\mathcal{D}$. Define the loss class $\mathcal{L}^{\mathcal{H}}_{\mathcal{G}, \mu} = \{(x, y) \mapsto \ell_{\G, \mu}(h, (x, y)): h \in \mathcal{H}\}$. Observe that we can reparameterize $\mathcal{L}^{\mathcal{H}}_{\mathcal{G}, \mu}$ as the composition of a $L$-Lipschitz function $\ell(x)$ and the function class $\mathcal{F}_{\mathcal{G}, \mu}^{\mathcal{H}} = \{(x, y) \mapsto y\mathbbm{E}_{g \sim \mu}\left[h(g(x))\right]: h \in \mathcal{H}\}$. By Proposition \ref{thm:rad}, to show the uniform convergence property of $\ell_{\G, \mu}(h, (x, y))$, it suffices to upper bound $\hat{\mathfrak{R}}_m(\mathcal{L}^{\mathcal{H}}_{\mathcal{G}, \mu}) = \hat{\mathfrak{R}}_m(\ell \circ \mathcal{F}_{\mathcal{G}, \mu}^{\mathcal{H}})$, the empirical Rademacher complexity  of the loss class. Since $\ell$ is $L$-Lipschitz, by Ledoux-Talagrand's contraction principle \citep{ledoux1991probability}, it follows that $\hat{\mathfrak{R}}_m(\mathcal{L}_{\mathcal{G}, \mu}^{\mathcal{H}}) = \hat{\mathfrak{R}}_m(\ell \circ \mathcal{F}_{\mathcal{G}, \mu}^{\mathcal{H}}) \leq L \cdot \hat{\mathfrak{R}}_m(\mathcal{F}_{\mathcal{G}, \mu}^{\mathcal{H}}).$ 
Thus, it actually suffices to upperbound $\hat{\mathfrak{R}}_m(\mathcal{F}_{\mathcal{G}, \mu}^{\mathcal{H}})$ instead. Starting with the definition of the empirical Rademacher complexity: 
\begin{align*}
\hat{\mathfrak{R}}_m(\mathcal{F}_{\mathcal{G}, \mu}^{\mathcal{H}}) &= \frac{1}{m}\mathbbm{E}_{\sigma \sim \{\pm 1\}^m}\left[\sup_{h \in \mathcal{H}} \left(\sum_{i=1}^m \sigma_i y_i\mathbbm{E}_{g \sim \mu}\left[ h(g(x_i)) \right] \right)\right]  \\
&= \frac{1}{m}\mathbbm{E}_{\sigma \sim \{\pm 1\}^m}\left[\sup_{h \in \mathcal{H}} \left(\mathbbm{E}_{g \sim \mu}\left[\sum_{i=1}^m \sigma_i  h(g(x_i)) \right] \right)\right]\\
&\leq \mathbbm{E}_{g \sim \mu} \left[\frac{1}{m}\mathbbm{E}_{\sigma \sim \{\pm 1\}^m} \left[ \sup_{h \in \mathcal{H}}\sum_{i=1}^m \sigma_i  h(g(x_i))\right] \right],
\end{align*}

where the last inequality follows from Jensen's inequality and Fubini's Theorem. 
%
Note that the quantity $\frac{1}{m}\mathbbm{E}_{\sigma \sim \{\pm 1\}^m} \left[ \sup_{h \in \mathcal{H}}\sum_{i=1}^m \sigma_i  h(g(x_i))\right]$ is the empirical Rademacher complexity of the hypothesis class $\mathcal{H}$ over the sample $\{g(x_1), ..., g(x_m)\}$ drawn i.i.d from the distribution defined by first sampling from the marginal data distribution,  $x \sim \mathcal{D}_{\mathcal{X}}$, and then applying the transformation $g(x)$. By standard VC arguments, $\hat{\mathfrak{R}}_m (\mathcal{H}) \leq O\left(\sqrt{\frac{d \ln(\frac{m}{d})}{m}}\right)$, which implies that $\hat{\mathfrak{R}}_m(\mathcal{F}_{\mathcal{G}, \mu}^{\mathcal{H}}) \leq \mathbbm{E}_{g \sim \mu} \left[ \hat{\mathfrak{R}}_m (\mathcal{H})\right] \leq O\left(\sqrt{\frac{d \ln(\frac{m}{d})}{m}}\right).$ Putting things together, we get $\hat{\mathfrak{R}}_m(\mathcal{L}^{\mathcal{H}}_{\mathcal{G}, \mu}) = \hat{\mathfrak{R}}_m(\ell \circ \mathcal{F}_{\mathcal{G}, \mu}^{\mathcal{H}}) \leq O\left( \sqrt{\frac{dL^2 \ln(\frac{m}{d})}{m}}\right).$
Proposition \ref{thm:rad} then implies that with probability $1-\delta$ over a sample $S \sim \mathcal{D}^m$ of size $m = O\left(\frac{dL^2 \ln(\frac{L}{\epsilon}) + \ln(\frac{1}{\delta})}{\epsilon^2}\right)$, we have 
$\left|\mathbbm{E}_{\mathcal{D}}\left[\ell_{\G, \mu}(h, (x, y)) \right] - \hat{\mathbbm{E}}_{S}\left[\ell_{\G, \mu}(h, (x, y)) \right]\right| \leq \epsilon $ 
for all $h \in \mathcal{H}$ simultaneously.
\end{proof}
We note that \citep{pmlr-v97-yin19b} was the first to use Rademacher complexity to study generalization guarantees in the adversarial robustness setting. Next, we show that uniform convergence of Lipschitz-losses immediately implies proper learning via ERM.
%
%

\begin{theorem}
\label{thm:learnlip}
Let  $\ell_{\mathcal{G}, \mu}(h, (x, y)) = \ell(y\mathbbm{E}_{g \sim \mu}\left[h(g(x)) \right])$ s.t. $\ell: \mathbbm{R} \rightarrow \mathbbm{R}$ 
is a $L$-Lipschitz function.  For every hypothesis class $\mathcal{H}$,  adversary $(\mathcal{G}, \mu)$, and  $(\epsilon, \delta) \in (0, 1)^2$, the proper learning rule $\mathcal{A}(S) = \argmin_{h \in \mathcal{H}}\hat{\mathbbm{E}}_S\left[\ell_{\mathcal{G}, \mu}(h, (x, y)) \right]$, for any distribution $\mathcal{D}$ over $\mathcal{X} \times \mathcal{Y}$, achieves, with probability at least $1 - \delta$ over a sample $S \sim \mathcal{D}^n$ of size $n \geq O\left( \frac{\text{VC}({\mathcal{H}})L^2 \ln(\frac{L}{\epsilon}) + \ln(\frac{1}{\delta})}{\epsilon^2} \right)$, the guarantee  
$$\mathbbm{E}_{\mathcal{D}}\left[\ell_{\G, \mu}(\mathcal{A}(S), (x, y)) \right] \leq \inf_{h \in \mathcal{H}}\mathbbm{E}_{\mathcal{D}}\left[\ell_{\G, \mu}(h, (x, y)) \right]  + \epsilon.$$
%
\end{theorem}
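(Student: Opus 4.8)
The plan is to derive Theorem~\ref{thm:learnlip} as an essentially immediate corollary of Lemma~\ref{lem: uclip} via the standard ``uniform convergence implies ERM works'' argument. First I would invoke Lemma~\ref{lem: uclip}: fixing $\epsilon' = \epsilon/2$ and $\delta$, with probability at least $1-\delta$ over $S \sim \mathcal{D}^n$ with $n = O\!\left(\frac{\text{VC}(\mathcal{H})L^2\ln(L/\epsilon) + \ln(1/\delta)}{\epsilon^2}\right)$ we have, simultaneously for all $h \in \mathcal{H}$,
$$\left|\mathbbm{E}_{\mathcal{D}}\left[\ell_{\G,\mu}(h,(x,y))\right] - \hat{\mathbbm{E}}_S\left[\ell_{\G,\mu}(h,(x,y))\right]\right| \leq \frac{\epsilon}{2}.$$
Condition on this event for the rest of the argument.

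Next I would run the classical ERM comparison chain. Write $\hat h = \mathcal{A}(S) = \argmin_{h\in\mathcal{H}}\hat{\mathbbm{E}}_S[\ell_{\G,\mu}(h,(x,y))]$, and let $h^\star$ be any hypothesis in $\mathcal{H}$ (we will take an infimizer, or one that is $\eta$-close to the infimum, and let $\eta\to 0$). Then
$$\mathbbm{E}_{\mathcal{D}}\!\left[\ell_{\G,\mu}(\hat h,(x,y))\right] \leq \hat{\mathbbm{E}}_S\!\left[\ell_{\G,\mu}(\hat h,(x,y))\right] + \frac{\epsilon}{2} \leq \hat{\mathbbm{E}}_S\!\left[\ell_{\G,\mu}(h^\star,(x,y))\right] + \frac{\epsilon}{2} \leq \mathbbm{E}_{\mathcal{D}}\!\left[\ell_{\G,\mu}(h^\star,(x,y))\right] + \epsilon,$$
where the first and third inequalities are the two-sided uniform convergence bound and the middle one is the defining optimality of the ERM on the sample. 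Taking the infimum over $h^\star \in \mathcal{H}$ (handling the infimum-vs-minimum subtlety by choosing $h^\star$ with $\mathbbm{E}_{\mathcal{D}}[\ell_{\G,\mu}(h^\star,\cdot)] \leq \inf_{h\in\mathcal{H}}\mathbbm{E}_{\mathcal{D}}[\ell_{\G,\mu}(h,\cdot)] + \eta$ and then absorbing $\eta$) yields exactly the claimed guarantee.

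There is essentially no hard part here — the only things to be slightly careful about are: (i) the argmin in $\mathcal{A}$ is assumed to exist (or one argues with an approximate minimizer, which only changes the bound by a negligible additive constant), (ii) the loss $\ell_{\G,\mu}$ must be bounded on the relevant range for Proposition~\ref{thm:rad} to apply, which is fine since $y\mathbbm{E}_{g\sim\mu}[h(g(x))] \in [-1,1]$ and $\ell$ is $L$-Lipschitz, so the loss takes values in a bounded interval of width $O(L)$ — this is exactly what makes the sample complexity in Lemma~\ref{lem: uclip} finite and is already baked into its statement, (iii) I should note that $\mathcal{A}$ is genuinely \emph{proper}, i.e.\ $\mathcal{A}(S) \in \mathcal{H}$, which is immediate from its definition as an argmin over $\mathcal{H}$. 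So the whole proof is three lines of the ERM inequality chain plus a citation to Lemma~\ref{lem: uclip}, and I would present it in that compact form.
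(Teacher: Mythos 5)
Your proposal is correct and follows essentially the same route as the paper's own proof in Appendix B.1: invoke Lemma \ref{lem: uclip} with accuracy $\epsilon/2$ and then run the standard ERM comparison chain through the empirical optimality of $\hat{h}$ and the two-sided uniform convergence bound. Your added care about the infimum-vs-minimum issue and boundedness of the loss is a minor refinement the paper glosses over, not a difference in approach.
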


At a high-level, Theorem \ref{thm:learnlip} shows finite VC dimension is sufficient for achieving robustness \textit{between} the average- and worst-case using ERM. In fact, the next theorem, whose proof can be found in Appendix \ref{app:notnec}, shows that finite VC dimension may not even be necessary for this to be true. 
 
\begin{theorem}
\label{thm:notnec}
Let  $\ell_{\mathcal{G}, \mu}(h, (x, y)) = \ell(y\mathbbm{E}_{g \sim \mu}\left[h(g(x))\right])$ s.t. $\ell: \mathbbm{R} \rightarrow \mathbbm{R}$ 
is a $L$-Lipschitz function. There exists $\mathcal{H}$ and $(\mathcal{G}, \mu)$ s.t. $\text{VC}(\mathcal{H}) = \infty$ but $\mathcal{H}$ is still (properly) learnable under $\ell_{\mathcal{G}, \mu}(h, (x, y))$.
\end{theorem}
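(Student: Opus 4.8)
The plan is to exhibit a single class $\mathcal{H}$ with $\mathrm{VC}(\mathcal{H}) = \infty$ together with an adversary $(\mathcal{G},\mu)$ under which the averaging operation $h \mapsto \bigl(x \mapsto \mathbbm{E}_{g \sim \mu}[h(g(x))]\bigr)$ collapses the complexity of $\mathcal{H}$. The key observation is that the map $h \mapsto \mathbbm{E}_{g \sim \mu}[h(g(x))]$ need not be injective, and even when it is not constant, the induced real-valued function class $\mathcal{F}_{\mathcal{G},\mu}^{\mathcal{H}} = \{(x,y) \mapsto y\,\mathbbm{E}_{g \sim \mu}[h(g(x))] : h \in \mathcal{H}\}$ can have bounded Rademacher complexity even though $\mathcal{H}$ itself does not. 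Once $\hat{\mathfrak{R}}_m(\mathcal{F}_{\mathcal{G},\mu}^{\mathcal{H}})$ is controlled, the Ledoux--Talagrand contraction (exactly as in the proof of Lemma~\ref{lem: uclip}) and Proposition~\ref{thm:rad} give uniform convergence of $\ell_{\mathcal{G},\mu}$, and hence proper learnability via ERM exactly as in Theorem~\ref{thm:learnlip}, with no appeal to $\mathrm{VC}(\mathcal{H})$ anywhere.

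Concretely, I would take $\mathcal{X} = \mathbbm{R}^d$ (or $\mathbbm{R}$), let $\mathcal{H}$ be an infinite-VC class — for instance all indicator functions of finite subsets of a countable set $\{x_1, x_2, \dots\}$, or thresholds composed with some wild embedding — and choose $\mathcal{G}$ and $\mu$ so that for every $x$ in the support of interest, $\mathcal{G}(x)$ is a set over which $\mu_x$ is non-atomic (e.g.\ $\mathcal{G}$ the translations by an $\ell_p$-ball and $\mu$ uniform, as in Section~\ref{sec:relaxedlossnothelp}). Then for any $h \in \mathcal{H}$ that differs from another $h'$ only on a $\mu_x$-null set of perturbations, the two induce the same margin function. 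If $\mathcal{H}$ is chosen so that \emph{every} member, after this averaging, yields the \emph{same} function $x \mapsto \mathbbm{E}_{g\sim\mu}[h(g(x))]$ (say because each $h$ disagrees with the all-ones function only on a $\mu_x$-null set for each $x$), then $\mathcal{F}_{\mathcal{G},\mu}^{\mathcal{H}}$ is a singleton, $\hat{\mathfrak{R}}_m(\mathcal{F}_{\mathcal{G},\mu}^{\mathcal{H}}) = 0$, and learnability is immediate while $\mathrm{VC}(\mathcal{H})$ can be made infinite by packing the null sets densely enough. A cleaner variant: make $\mathcal{G}(x)$ itself uncountable and each $h \in \mathcal{H}$ depend only on a coordinate that $\mu_x$ integrates out, so the margin is literally constant across $\mathcal{H}$.

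The steps, in order, are: (1) fix $\mathcal{X}$, $\mathcal{G}$, $\mu$ with $\mu_x$ non-atomic; (2) construct an infinite collection of hypotheses $\{h_b\}$ indexed so that they shatter an infinite set (giving $\mathrm{VC}(\mathcal{H}) = \infty$) yet each $h_b$ agrees with a fixed reference hypothesis $h_0$ outside a $\mu_x$-measure-zero subset of $\mathcal{G}(x)$ for every $x$; (3) conclude $\mathbbm{E}_{g\sim\mu}[h_b(g(x))] = \mathbbm{E}_{g\sim\mu}[h_0(g(x))]$ for all $b$ and all $x$, so $\mathcal{F}_{\mathcal{G},\mu}^{\mathcal{H}}$ is a singleton and $\hat{\mathfrak{R}}_m(\mathcal{F}_{\mathcal{G},\mu}^{\mathcal{H}}) = 0$; (4) invoke Ledoux--Talagrand and Proposition~\ref{thm:rad} to get uniform convergence, hence run the ERM argument of Theorem~\ref{thm:learnlip} verbatim.

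The main obstacle is step (2): I need the shattering witnesses to be "invisible" to the averaging — each hypothesis must differ from the reference on a set that is rich enough to shatter infinitely many points but $\mu_x$-null for every relevant $x$. This forces a careful bookkeeping of which points get shattered versus which perturbation regions carry the disagreement; the points being shattered cannot lie in the support of any $\mu_x$ in a way that the averaging detects, so one wants the shattered points themselves to be the \emph{centers} $x$ (where only the value on a null perturbation set matters) rather than generic perturbed instances. Checking that $\mathcal{H}$ genuinely has infinite VC dimension — i.e.\ that arbitrarily large sets of centers can be freely labeled by choosing the appropriate null-set modifications — while simultaneously keeping all margin functions identical, is the delicate part; everything after that is a direct reuse of the machinery already developed for Lemma~\ref{lem: uclip} and Theorem~\ref{thm:learnlip}.
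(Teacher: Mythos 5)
Your proposal is correct, and it shares the paper's analytical skeleton (collapse the smoothed margin class $\mathcal{F}_{\mathcal{G},\mu}^{\mathcal{H}}$, then reuse the Ledoux--Talagrand contraction, Proposition~\ref{thm:rad}, and the ERM argument of Theorem~\ref{thm:learnlip}), but the witness construction is genuinely different. The paper takes $\mathcal{X}=\mathbbm{R}$, $\mathcal{H}=\{\mathrm{sign}(\sin(\omega x))\}$, and \emph{scaling} perturbations $g_c(x)=cx$ with $c$ uniform on $[-1,1]$; an oddness/cancellation argument shows every margin function is $\{0,1\}$-valued depending only on whether $\omega x=0$, so the smoothed class has exactly two elements and Massart's lemma finishes the job. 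You instead take a non-atomic $\mu_x$ (e.g.\ uniform translations over an $\ell_p$ ball) and let each $h\in\mathcal{H}$ differ from a fixed reference $h_0\equiv 1$ only on a $\mu_x$-null set; concretely, the class $\{h_A: A\subset\mathcal{X}\ \text{finite}\}$ with $h_A=-1$ on $A$ and $+1$ elsewhere shatters every finite set (so $\mathrm{VC}(\mathcal{H})=\infty$) while $\mathbbm{E}_{g\sim\mu}[h_A(g(x))]=1$ for all $x$, making $\mathcal{F}_{\mathcal{G},\mu}^{\mathcal{H}}$ a singleton with zero Rademacher complexity, and indeed making every $h\in\mathcal{H}$ have identical risk, so learnability is immediate. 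Your construction is, if anything, simpler and more extreme than the paper's, and the difficulty you flag in your step (2) is not actually delicate: since $\mu_x$ is continuous, \emph{any} finite disagreement set is invisible to the averaging regardless of where the shattered points sit relative to the perturbation supports, so no bookkeeping between centers and perturbation regions is needed. The paper's example buys a slightly less degenerate smoothed class (two functions rather than one, and hypotheses that genuinely differ on sets of positive measure away from the symmetry), while yours buys a one-line verification; both suffice for the existence claim.
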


Together, Theorems \ref{thm:learnlip} and \ref{thm:notnec} showcase an interesting trade-off.  Theorem \ref{thm:notnec} indicates that by carefully choosing $(\mathcal{G}, \mu)$, the complexity of $\mathcal{H}$ can be essentially smoothed out. On the other hand, Theorem \ref{thm:learnlip} shows that any complexity in $(\mathcal{G}, \mu)$ can be smoothed out if $\mathcal{H}$ has finite VC dimension.  This interplay between the complexities of $\mathcal{H}$ and $(\mathcal{G}, \mu)$ closely matches the intuition of \citet{chapelle2000vicinal} in their work on Vicinal Risk Minimization.
Note that the results in this section do not contradict that of Section \ref{sec:relaxedlossnothelp} because $\ell^{\rho}_{\G, \mu}(h, (x, y))$ is a \textit{non-Lipschitz} function of $y\mathbbm{E}_{g \sim \mu}\left[h(g(x))\right]$.

\section{Proper Learnability Under Relaxed Competition} 
\label{sec:relaxedcomp}

The results of Section \ref{sec:relaxedlossnothelp} show that relaxing the worst-case adversarial loss may not always enable proper learning, even for very natural robust loss relaxations. In this section, we show that this bottleneck can be alleviated if we also allow the learner to compete against a slightly stronger notion of robustness. Furthermore, we expand on this idea by exploring other robust learning settings where allowing the learner to compete against a stronger notion of robustness enables proper learnability.
We denote this type of modification to the standard worst-case and probabilistic robustness setting as robust learning under \textit{relaxed competition}. Prior works on Tolerantly Robust PAC Learning \citep{ashtiani2022adversarially, bhattacharjee2022robust} mentioned in the introduction fit under this umbrella. 

Our main tool in this section is Lemma \ref{lem:sand}, which we term as sandwich uniform convergence (SUC). Roughly speaking, SUC provides a sufficient condition under which ERM outputs a predictor that generalizes well w.r.t. a stricter notion of loss. A special case of SUC has implicitly appeared in margin theory (e.g., see \citet[Section 5.4]{mohri2018foundations}), where one evaluates the $0$-$1$ risk of the output hypothesis against the optimal \textit{margin} $0$-$1$ risk.  

\begin{lemma} [Sandwich Uniform Convergence]
\label{lem:sand}
Let $\ell_1(h, (x, y))$ and\
$\ell_2(h, (x, y))$ be bounded, non-negative loss functions s.t. for all $h\in \mathcal{H}$ and $(x, y) \in \mathcal{X} \times \mathcal{Y}$, we have $\ell_1(h, (x, y)) \leq \ell_2(h, (x, y)) \leq 1$. If \emph{there exists} a loss function $\tilde{\ell}(h, (x, y))$ s.t. $\ell_1(h, (x, y)) \leq \tilde{\ell}(h, (x, y)) \leq \ell_2(h, (x, y))$ and $\tilde{\ell}(h, (x, y))$ enjoys the \emph{uniform convergence} property with sample complexity $n(\epsilon, \delta)$, then the learning rule $\mathcal{A}(S) = \inf_{h \in \mathcal{H}} \hat{\mathbbm{E}}_S\left[\ell_2(h, (x, y))\right]$ achieves,  with probability $1-\delta$ over a sample $S \sim \mathcal{D}^m$ of size $m \geq n(\epsilon/2, \delta/2) + O\left(\frac{\ln(\frac{1}{\delta})}{\epsilon^2}\right)$, the guarantee
$$\mathbbm{E}_{\mathcal{D}}\left[\ell_1(\mathcal{A}(S), (x, y))\right] - \inf_{h \in \mathcal{H}}\mathbbm{E}_{\mathcal{D}}\left[\ell_2(h, (x, y))\right] \leq \epsilon.$$ 
%
%
\end{lemma}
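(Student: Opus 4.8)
The plan is to run a standard ERM argument, but to split the work among the three losses so that uniform convergence is only ever invoked for $\tilde\ell$. Write $\hat h := \mathcal{A}(S) \in \argmin_{h\in\mathcal{H}}\hat{\mathbbm{E}}_S[\ell_2(h,(x,y))]$ for the output of the learner (if the empirical minimum is not attained, take $\hat h$ to be a near-minimizer and absorb the slack into an auxiliary parameter below), and fix a comparator $h^\star\in\mathcal{H}$ with $\mathbbm{E}_{\mathcal{D}}[\ell_2(h^\star,(x,y))] \le \inf_{h\in\mathcal{H}}\mathbbm{E}_{\mathcal{D}}[\ell_2(h,(x,y))] + \epsilon'$ for an arbitrary slack $\epsilon'>0$ that we send to $0$ at the end (take $\epsilon'=0$ if the infimum is attained). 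I would then condition on two events: \emph{(i)} the uniform convergence event for $\tilde\ell$ at accuracy $\epsilon/2$ and confidence $\delta/2$, which holds with probability $\ge 1-\delta/2$ whenever $m\ge n(\epsilon/2,\delta/2)$; and \emph{(ii)} the event that the single, sample-independent hypothesis $h^\star$ satisfies $\hat{\mathbbm{E}}_S[\ell_2(h^\star,(x,y))] \le \mathbbm{E}_{\mathcal{D}}[\ell_2(h^\star,(x,y))] + \epsilon/2$, which follows from Hoeffding's inequality (using $0\le\ell_2\le 1$) with probability $\ge 1-\delta/2$ whenever $m = \Omega(\ln(1/\delta)/\epsilon^2)$. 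By a union bound both hold with probability $\ge 1-\delta$, and the hypothesis $m \ge n(\epsilon/2,\delta/2) + O(\ln(1/\delta)/\epsilon^2)$ makes $m$ large enough for each.

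On this good event, suppressing the argument $(x,y)$, I would chain
\begin{align*}
\mathbbm{E}_{\mathcal{D}}[\ell_1(\hat h)] &\le \mathbbm{E}_{\mathcal{D}}[\tilde\ell(\hat h)] \le \hat{\mathbbm{E}}_S[\tilde\ell(\hat h)] + \tfrac{\epsilon}{2} \le \hat{\mathbbm{E}}_S[\ell_2(\hat h)] + \tfrac{\epsilon}{2} \\
&\le \hat{\mathbbm{E}}_S[\ell_2(h^\star)] + \tfrac{\epsilon}{2} \le \mathbbm{E}_{\mathcal{D}}[\ell_2(h^\star)] + \epsilon \le \inf_{h\in\mathcal{H}}\mathbbm{E}_{\mathcal{D}}[\ell_2(h)] + \epsilon + \epsilon' .
\end{align*}
Here the first and third inequalities are the pointwise sandwich $\ell_1\le\tilde\ell\le\ell_2$; the second is uniform convergence of $\tilde\ell$, which applies to the data-dependent $\hat h$ precisely because event (i) holds for all $h\in\mathcal{H}$ simultaneously; the fourth is empirical optimality of $\hat h$ for $\ell_2$; the fifth is event (ii); and the last is the choice of $h^\star$. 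Rearranging and letting $\epsilon'\to 0$ yields the desired bound $\mathbbm{E}_{\mathcal{D}}[\ell_1(\mathcal{A}(S),(x,y))] - \inf_{h\in\mathcal{H}}\mathbbm{E}_{\mathcal{D}}[\ell_2(h,(x,y))] \le \epsilon$.

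I do not anticipate a real obstacle — the argument is essentially bookkeeping — but there are two points that need care. The first is understanding \emph{why} we cannot just apply uniform convergence to $\ell_2$: the comparator $h^\star$ witnessing near-optimality for $\ell_2$ is not controlled by uniform convergence of $\tilde\ell$, so its empirical-to-population gap must be handled by a separate single-hypothesis concentration inequality; this is exactly what produces the extra $O(\ln(1/\delta)/\epsilon^2)$ term and the $\delta/2$ budget split. The second is the infimum-versus-minimum issue, both for $\argmin_{h}\hat{\mathbbm{E}}_S[\ell_2(h)]$ and for $\inf_h\mathbbm{E}_{\mathcal{D}}[\ell_2(h)]$, which is handled cleanly by the vanishing slack $\epsilon'$ introduced above.
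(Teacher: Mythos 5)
Your proof is correct and follows essentially the same route as the paper's: uniform convergence of $\tilde{\ell}$ applied to the data-dependent $\hat{h}$, the pointwise sandwich $\ell_1 \leq \tilde{\ell} \leq \ell_2$, empirical optimality of $\hat{h}$ for $\ell_2$, a single-hypothesis Hoeffding bound for the comparator, and a union bound over the two events. The only difference is that you spell out the intermediate chaining steps and handle non-attainment of the infima via a vanishing slack, which the paper glosses over.
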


Lemma \ref{lem:sand}, whose proof is included in Appendix \ref{app: sanduc}, only requires the \textit{existence} of such a sandwiched loss function that enjoys uniform convergence---we do not actually require it to be computable. In the next two sections, we exploit this fact to give three new generalization guarantees for the empirical risk minimizer over the worst-case robust loss $\ell_{\mathcal{G}}(h, (x ,y))$ and $\rho$-probabilistic robust loss $\ell^{\rho}_{\mathcal{G}, \mu}(h, (x ,y))$, hereafter denoted by $\text{RERM}(S; \mathcal{G}): = \argmin_{h \in \mathcal{H}}\hat{\mathbbm{E}}_S\left[\ell_{\mathcal{G}}(h, (x, y)) \right]$ and $\text{PRERM}(S; \mathcal{G}, \rho): = \argmin_{h \in \mathcal{H}}\hat{\mathbbm{E}}_S\left[\ell^{\rho}_{\mathcal{G, \mu}}(h, (x, y)) \right]$ respectively.

\subsection{$(\rho, \rho^*)$-Probabilistically Robust PAC Learning}\label{sec:relaxed}

In light of the hardness result of Section \ref{sec:relaxedlossnothelp}, we slightly tweak the learning setup in Definition \ref{def:PRPAC} by allowing $\mathcal{A}$ to compete against the hypothesis minimizing the probabilistic robust risk at a level $\rho^* < \rho$. Under this further relaxation, we show that proper learning becomes possible, and that too, via PRERM. In particular, Theorem \ref{thm:properprobrelax} shows that while VC classes are not properly $\rho$-probabilistically robust PAC learnable, they are properly $(\rho, \rho^*)$-probabilistically robust PAC learnable. 

\begin{theorem}[Proper $(\rho, \rho^*)$-Probabilistically Robust PAC Learner]
\label{thm:properprobrelax}
Let $0 \leq \rho^* < \rho$. Then, for every hypothesis class $\mathcal{H}$, adversary $(\mathcal{G}, \mu)$, and $(\epsilon, \delta) \in (0, 1)^2$, the proper learning rule $\mathcal{A}(S) = \text{PRERM}(S; \mathcal{G}, \rho^*)$, for any distribution $\mathcal{D}$ over $\mathcal{X} \times \mathcal{Y}$, achieves, with probability at least $1 - \delta$ over a sample $S \sim \mathcal{D}^n$ of size $n \geq O\left(\frac{\frac{\text{VC}(\mathcal{H})}{(\rho-\rho^*)^2} \ln(\frac{1}{(\rho-\rho^*)\epsilon}) + \ln(\frac{1}{\delta})}{\epsilon^2}\right)$, the guarantee
$$R^{\rho}_{\mathcal{G}, \mu}(\mathcal{A}(S); \mathcal{D}) \leq \inf_{h \in \mathcal{H}}R^{\rho^*}_{\G, \mu}(h;\mathcal{D}) + \epsilon.$$
%
\end{theorem}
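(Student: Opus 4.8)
The plan is to invoke the Sandwich Uniform Convergence lemma (Lemma~\ref{lem:sand}) with $\ell_1 = \ell^{\rho}_{\mathcal{G}, \mu}$ and $\ell_2 = \ell^{\rho^*}_{\mathcal{G}, \mu}$. First I would observe that since $\rho^* < \rho$, the event $\{\mathbbm{P}_{g \sim \mu}(h(g(x)) \neq y) > \rho\}$ is contained in $\{\mathbbm{P}_{g \sim \mu}(h(g(x)) \neq y) > \rho^*\}$, so $\ell^{\rho}_{\mathcal{G}, \mu}(h, (x,y)) \leq \ell^{\rho^*}_{\mathcal{G}, \mu}(h, (x,y)) \leq 1$ for every $h \in \mathcal{H}$ and $(x,y) \in \mathcal{X}\times\mathcal{Y}$; thus $\ell_1$ and $\ell_2$ satisfy the hypotheses of Lemma~\ref{lem:sand}. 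Moreover $\argmin_{h \in \mathcal{H}} \hat{\mathbbm{E}}_S[\ell_2(h, (x,y))] = \mathrm{PRERM}(S; \mathcal{G}, \rho^*)$, so the learning rule produced by Lemma~\ref{lem:sand} is exactly $\mathcal{A}$.

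The crux is to exhibit a loss $\tilde{\ell}$ that is sandwiched between $\ell_1$ and $\ell_2$ \emph{and} enjoys uniform convergence. Writing $p(h, x, y) := \mathbbm{P}_{g \sim \mu}(h(g(x)) \neq y) = \frac{1 - y\mathbbm{E}_{g \sim \mu}[h(g(x))]}{2}$, I would take $\tilde{\ell}(h, (x,y)) := \phi(p(h,x,y))$ where $\phi : [0,1] \to [0,1]$ is the piecewise-linear ramp equal to $0$ on $[0, \rho^*]$, equal to $1$ on $[\rho, 1]$, and linear with slope $\frac{1}{\rho - \rho^*}$ on $[\rho^*, \rho]$. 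A quick case check over $p \le \rho^*$, $\rho^* < p \le \rho$, and $p > \rho$ (being careful that the indicators defining $\ell^{\rho}_{\mathcal{G},\mu}$ and $\ell^{\rho^*}_{\mathcal{G},\mu}$ are \emph{strict}) shows $\mathbbm{1}\{p > \rho\} \le \phi(p) \le \mathbbm{1}\{p > \rho^*\}$, i.e. $\ell_1 \le \tilde{\ell} \le \ell_2$ pointwise.

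To get uniform convergence for $\tilde{\ell}$ I would rewrite it as a Lipschitz function of the probabilistically robust margin. Since $p = \frac{1-t}{2}$ for $t = y\mathbbm{E}_{g \sim \mu}[h(g(x))]$, we have $\tilde{\ell}(h, (x,y)) = \ell(t)$ with $\ell(t) := \phi\big(\tfrac{1-t}{2}\big)$; as $\phi$ is $\frac{1}{\rho - \rho^*}$-Lipschitz and $t \mapsto \frac{1-t}{2}$ is $\frac{1}{2}$-Lipschitz, $\ell$ is $L$-Lipschitz with $L = \frac{1}{2(\rho - \rho^*)}$. Lemma~\ref{lem: uclip} then gives that $\tilde{\ell}$ has the uniform convergence property with sample complexity $n(\epsilon, \delta) = O\!\left(\frac{\mathrm{VC}(\mathcal{H}) L^2 \ln(L/\epsilon) + \ln(1/\delta)}{\epsilon^2}\right) = O\!\left(\frac{\frac{\mathrm{VC}(\mathcal{H})}{(\rho - \rho^*)^2}\ln\!\left(\frac{1}{(\rho-\rho^*)\epsilon}\right) + \ln(1/\delta)}{\epsilon^2}\right)$. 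Plugging this into Lemma~\ref{lem:sand} with $(\epsilon/2, \delta/2)$ and absorbing the additive $O(\ln(1/\delta)/\epsilon^2)$ term yields the stated sample size $n$ and the guarantee $R^{\rho}_{\mathcal{G}, \mu}(\mathcal{A}(S); \mathcal{D}) \le \inf_{h \in \mathcal{H}} R^{\rho^*}_{\mathcal{G}, \mu}(h; \mathcal{D}) + \epsilon$.

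There is no serious obstacle here: the argument is essentially a reduction to the two lemmas. The one point that requires care is the construction of $\tilde{\ell}$ --- the ramp must rise from $0$ to $1$ exactly over the window $(\rho^*, \rho]$ so that it dominates $\mathbbm{1}\{p > \rho\}$ while being dominated by $\mathbbm{1}\{p > \rho^*\}$, and its Lipschitz constant $\Theta(1/(\rho-\rho^*))$ is precisely what propagates into the $(\rho - \rho^*)^{-2}$ dependence in the sample complexity. It is also worth noting explicitly that $\tilde{\ell}$ need not be efficiently computable; only its existence is invoked, which is exactly the flexibility Lemma~\ref{lem:sand} provides, so $\mathcal{A}$ itself only ever minimizes the bona fide empirical $\rho^*$-probabilistic robust risk.
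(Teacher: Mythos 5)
Your proposal is correct and follows essentially the same route as the paper: the sandwiched loss you construct is exactly the paper's probabilistically robust ramp loss $\min\left(1, \max\left(0, \frac{\mathbbm{P}_{g \sim \mu}[h(g(x)) \neq y] - \rho^*}{\rho - \rho^*}\right)\right)$, verified to lie between $\ell^{\rho}_{\mathcal{G},\mu}$ and $\ell^{\rho^*}_{\mathcal{G},\mu}$ and to be $O\left(\frac{1}{\rho-\rho^*}\right)$-Lipschitz in the margin, after which Lemma~\ref{lem: uclip} and Lemma~\ref{lem:sand} yield the stated guarantee. Your explicit case check of the strict inequalities in the indicators is a slightly more careful rendering of a step the paper leaves to the reader, but the argument is the same.
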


In contrast to Section \ref{sec:relaxedlossnothelp}, where proper learning is not always possible, Theorem \ref{thm:properprobrelax}  shows that if we compare our learner to the best hypothesis for a \textit{slightly} stronger level of probabilistic robustness, then not only is proper learning possible for VC classes, but it is possible via an ERM-based learner. Our main technique to prove Theorem \ref{thm:properprobrelax} is
to consider a \textit{different} probabilistically robust loss function that is (1) a Lipschitz function of $y\mathbbm{E}_{g \sim \mu}\left[h(g(x)) \neq y\right]$ and (2) can be sandwiched in between $\ell_{\mathcal{G}, \mu}^{\rho^*}$ and $\ell_{\mathcal{G}, \mu}^{\rho}$. Then, Theorem \ref{thm:properprobrelax} follows from  Lemma \ref{lem:sand}. The full proof is in Appendix \ref{app:properprobrelax}. 
  


\subsection{$(\rho, \G)$-Probabilistically Robust PAC Learning}\label{sec:adv_relaxed}
Can measure-independent learning guarantees be achieved if we instead compare the learner's probabilistically robust risk $R_{\mathcal{G}, \mu}^{\rho}$ to the best  \textit{adversarially robust risk} $R_{\mathcal{G}}$ over $\mathcal{H}$?  We answer this in the affirmative by using SUC. We show that if one wants to compete against the best hypothesis for the worst-case adversarial robust risk, it is sufficient to run RERM.

\begin{theorem} [Proper $(\rho, \G)$-Probabilistically Robust PAC Learner]
\label{thm:advrobrelax}
For every hypothesis class $\mathcal{H}$, adversary $\mathcal{G}$, and $(\epsilon, \delta) \in (0, 1)^2$, the proper learning rule $\mathcal{A}(S) = \emph{\text{RERM}}(S; \mathcal{G})$, for any measure $\mu$ over $\mathcal{G}$ and any distribution $\mathcal{D}$ over $\mathcal{X} \times \mathcal{Y}$, achieves, with probability at least $1 - \delta$ over a sample $S \sim \mathcal{D}^n$ of size $n \geq O\left( \frac{\frac{\text{VC}({\mathcal{H}})}{\rho^2} \ln(\frac{1}{\rho \epsilon}) + \ln(\frac{1}{\delta})}{\epsilon^2} \right)$, the guarantee  
$$R^{\rho}_{\mathcal{G}, \mu}(\mathcal{A}(S); \mathcal{D}) \leq  \inf_{h \in \mathcal{H}}R_{\G}(h;\mathcal{D}) + \epsilon.$$
%
%
\end{theorem}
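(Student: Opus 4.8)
The plan is to invoke Sandwich Uniform Convergence (Lemma~\ref{lem:sand}) with $\ell_1 = \ell^{\rho}_{\mathcal{G}, \mu}$ and $\ell_2 = \ell_{\mathcal{G}}$, and to supply an intermediate loss $\tilde{\ell}$ that is both pointwise squeezed between them and a Lipschitz function of the probabilistically robust margin, so that Lemma~\ref{lem: uclip} controls its uniform convergence rate. Writing $t := y\,\mathbbm{E}_{g \sim \mu}[h(g(x))] \in [-1, 1]$ and recalling $\mathbbm{P}_{g \sim \mu}[h(g(x)) \neq y] = \frac{1-t}{2}$, I would take
$$\tilde{\ell}(h, (x, y)) := \min\!\left(\frac{\mathbbm{P}_{g \sim \mu}[h(g(x)) \neq y]}{\rho},\, 1\right) = \min\!\left(\frac{1-t}{2\rho},\, 1\right),$$
which is exactly the clipped loss from Section~\ref{sec:relaxedloss} with $\ell(t) = \min(\frac{1-t}{2\rho}, 1)$, an $L$-Lipschitz function of $t$ with $L = \frac{1}{2\rho}$.

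Next I would verify the three pointwise inequalities needed by Lemma~\ref{lem:sand}. First, $0 \leq \ell^{\rho}_{\mathcal{G}, \mu} \leq \ell_{\mathcal{G}} \leq 1$: the loss $\ell_{\mathcal{G}}(h,(x,y))$ is $\{0,1\}$-valued, and if $\ell_{\mathcal{G}}(h, (x,y)) = 0$ then $h(g(x)) = y$ for every $g \in \mathcal{G}$, whence $\mathbbm{P}_{g \sim \mu}[h(g(x)) \neq y] = 0 \leq \rho$ and $\ell^{\rho}_{\mathcal{G}, \mu}(h,(x,y)) = 0$. Second, $\ell^{\rho}_{\mathcal{G}, \mu} \leq \tilde{\ell}$: if $\ell^{\rho}_{\mathcal{G}, \mu}(h,(x,y)) = 1$ then $\mathbbm{P}_{g \sim \mu}[h(g(x)) \neq y] > \rho$, so $\tilde{\ell}(h,(x,y)) = 1$. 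Third, $\tilde{\ell} \leq \ell_{\mathcal{G}}$: when $\ell_{\mathcal{G}}(h,(x,y)) = 0$ the argument above gives $t = 1$, so $\tilde{\ell}(h,(x,y)) = \min(0,1) = 0$; otherwise $\ell_{\mathcal{G}}(h,(x,y)) = 1 \geq \tilde{\ell}(h,(x,y))$ since $\tilde{\ell} \leq 1$ always.

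With the sandwich in place, Lemma~\ref{lem: uclip} applied with $L = \frac{1}{2\rho}$ shows $\tilde{\ell}$ enjoys uniform convergence with sample complexity $n(\epsilon, \delta) = O\!\left(\frac{\text{VC}(\mathcal{H})\rho^{-2}\ln(\frac{1}{\rho \epsilon}) + \ln(\frac{1}{\delta})}{\epsilon^2}\right)$, and this rate is independent of $(\mathcal{G}, \mu)$. Feeding $\ell_1, \ell_2, \tilde{\ell}$ into Lemma~\ref{lem:sand}, the rule $\mathcal{A}(S) = \argmin_{h \in \mathcal{H}} \hat{\mathbbm{E}}_S[\ell_{\mathcal{G}}(h, (x,y))] = \text{RERM}(S; \mathcal{G})$ achieves, with probability $1 - \delta$ over $S \sim \mathcal{D}^m$ with $m \geq n(\epsilon/2, \delta/2) + O(\ln(1/\delta)/\epsilon^2) = O\!\left(\frac{\text{VC}(\mathcal{H})\rho^{-2}\ln(\frac{1}{\rho \epsilon}) + \ln(\frac{1}{\delta})}{\epsilon^2}\right)$, the bound $\mathbbm{E}_{\mathcal{D}}[\ell^{\rho}_{\mathcal{G}, \mu}(\mathcal{A}(S), (x,y))] - \inf_{h \in \mathcal{H}} \mathbbm{E}_{\mathcal{D}}[\ell_{\mathcal{G}}(h, (x,y))] \leq \epsilon$, which is exactly $R^{\rho}_{\mathcal{G}, \mu}(\mathcal{A}(S); \mathcal{D}) \leq \inf_{h \in \mathcal{H}} R_{\mathcal{G}}(h; \mathcal{D}) + \epsilon$. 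I do not expect a real obstacle: the only substantive steps are recognizing that the clipped loss $\min(\mathbbm{P}_{g \sim \mu}[\cdot]/\rho, 1)$ is simultaneously Lipschitz in the margin and sandwiched between the $\rho$-probabilistic and the worst-case robust losses, and the small observation that $\ell_{\mathcal{G}}$ is $\{0,1\}$-valued and vanishes only when the margin equals $1$, which is what makes $\tilde{\ell} \leq \ell_{\mathcal{G}}$ hold pointwise rather than merely in expectation. The only point requiring care is the $\rho \to 0$ regime, where $L = 1/(2\rho)$ blows up and the bound degrades as expected, so one implicitly takes $\rho \in (0,1)$.
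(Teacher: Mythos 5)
Your proposal is correct and follows essentially the same route as the paper: the paper also invokes the sandwich lemma with the intermediate loss $\min\left(1, \frac{\mathbbm{P}_{g \sim \mu}[h(g(x)) \neq y]}{\rho}\right)$ (its probabilistically robust ramp loss at $(\rho, \rho^*)$ with $\rho^* = 0$), uses its $O(1/\rho)$-Lipschitzness in the margin together with Lemma \ref{lem: uclip} for uniform convergence, and concludes via Lemma \ref{lem:sand}. Your write-up is if anything slightly more careful, since you verify the pointwise sandwich inequalities explicitly where the paper only asserts them.
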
 
The proof of Theorem \ref{thm:advrobrelax} can be found in Appendix \ref{app:advroberelax}, which follows directly from Lemma \ref{lem:sand} by a suitable choice of the sandwiched loss $\ell$. We make a few remarks about the practical importance of Theorem \ref{thm:advrobrelax}. Theorem \ref{thm:advrobrelax} implies that for any pre-specified perturbation function class $\mathcal{G}$ (for example $\ell_p$ balls), running RERM is sufficient to obtain a hypothesis that is probabilistically robust w.r.t. $\textit{any}$ fixed measure $\mu$ over $\mathcal{G}$. Moreover, the level of robustness of the predictor output by RERM,  as measured by $1-\rho$,  scales directly with the sample size - the more samples one has, the smaller $\rho$ can be made. Alternatively, for a fixed sample size $m$, desired error $\epsilon$ and confidence $\delta$, one can use the sample complexity guarantee in Theorem \ref{thm:advrobrelax} to back-solve the robustness guarantee $\rho$.

\subsection{Tolerantly Robust PAC Learning}
 In Tolerantly Robust PAC Learning  \citep{bhattacharjee2022robust, ashtiani2022adversarially}, the learner's adversarially robust risk under a perturbation set $\mathcal{G}$ is compared with the best achievable adversarial robust risk for a larger perturbation set $\mathcal{G}' \supset \mathcal{G}$. \cite{ashtiani2022adversarially} study the setting where both $\mathcal{G}$ and $\mathcal{G}'$ induce $\ell_p$ balls with radius $r$ and $(1+\gamma)r$ respectively.  In the work of \cite{bhattacharjee2022robust}, $\mathcal{G}$ is arbitrary, but $\mathcal{G}'$ is constructed such that it induces perturbation sets that are the union of balls with radius $\gamma$ that cover $\mathcal{G}$. Critically, \citet{bhattacharjee2022robust} show that, under certain assumptions, running RERM over a larger perturbation set $\mathcal{G}'$ is sufficient for Tolerantly Robust PAC learning.  In this section, we take a slightly different approach to Tolerantly Robust PAC learning. Instead of having the learner compete against the best possible risk for a larger perturbation set, we have the learner compete against the best possible adversarial robust risk for $\G$, but evaluate the learner's adversarial robust risk using a \textit{smaller} perturbation set $\G' \subset \G$. 

For what $\mathcal{G}' \subset \mathcal{G}$ is Tolerantly Robust PAC learning via RERM possible? As an immediate result of Lemma \ref{lem:sand} and Vapnik's ``General Learning'', finite VC dimension of the loss class $\mathcal{L}_{\mathcal{G}'}^{\mathcal{H}} = \{(x, y) \mapsto \ell_{\G}(h, (x,y)): h \in \mathcal{H}\}$ is sufficient. Note that finite VC dimension of $\mathcal{L}_{\mathcal{G}'}^{\mathcal{H}}$ implies that the loss function $\ell_{\mathcal{G}'}(h, (x, y))$ enjoys the uniform convergence property with sample complexity $O\left(\frac{\text{VC}(\mathcal{L}^{\mathcal{G}'}_{\mathcal{H}}) + \ln(\frac{1}{\delta})}{\epsilon^2}\right)$. Thus, taking $\ell_1(h, (x, y)) = \tilde{\ell}(h, (x, y)) = \ell_{\mathcal{G}'}(h, (x, y))$ and $\ell_2(h, (x, y)) = \ell_{\mathcal{G}}(h, (x, y))$ in Lemma \ref{lem:sand}, we have that if there exists a $\mathcal{G}' \subset \mathcal{G}$ s.t. $\text{VC}(\mathcal{L}^{\mathcal{G}'}_{\mathcal{H}}) < \infty$, then with probability $1 - \delta$ over a sample $S \sim \mathcal{D}^n$ of size $n = O\left(\frac{\text{VC}(\mathcal{L}_{\mathcal{G}'}^{\mathcal{H}}) + \ln(\frac{1}{\delta})}{\epsilon^2}\right)$, $R_{\mathcal{G'}}(\mathcal{A}(S); \mathcal{D}) \leq  \inf_{h \in \mathcal{H}}R_{\G}(h;\mathcal{D}) + \epsilon$, where $\mathcal{A}(S) = \text{RERM}(S; \mathcal{G})$. 

Alternatively, if $\mathcal{G}' \subset \mathcal{G}$ such that there exists a \textit{finite} subset $\tilde{\mathcal{G}} \subset \mathcal{G}$ where $\ell_{\mathcal{G}'}(h, (x, y)) \leq \ell_{\tilde{\mathcal{G}}}(h, (x, y))$, then Tolerantly Robust PAC learning via RERM is possible with sample complexity that scales according to $O\left(\frac{\text{VC}(\mathcal{H})\log(|\tilde{\mathcal{G}}|) + \ln(\frac{1}{\delta})}{\epsilon^2}\right)$. This result essentially comes from the fact that the VC dimension of the loss class for any finite perturbation set $\tilde{\mathcal{G}}$ incurs only a $\log(|\tilde{\mathcal{G}}|)$ blow-up  from the VC dimension of $\mathcal{H}$ (see Lemma 1.1 in \cite{attias2021improved}). Thus, finite VC dimension of $\mathcal{H}$ implies finite VC dimension of the loss class $\mathcal{L}_{\mathcal{H}}^{\tilde{\mathcal{G}}}$ which implies uniform convergence of the loss $\ell_{\tilde{\mathcal{G}}}(h, (x, y))$, as needed for Lemma \ref{lem:sand} to hold. 

We now give an example where such a finite approximation of $\mathcal{G}'$ is possible. In order to do so, we will need to consider a \textit{metric space} of perturbation functions $(\mathcal{G}, d)$ and  define a notion of ``nice'' perturbation sets, similar to``regular'' hypothesis classes from \cite{bhattacharjee2022robust}.

\begin{definition} [$r$-Nice Perturbation Set]
\label{def: r-nice}
Let $\mathcal{H}$ be a hypothesis class and $(\mathcal{G}, d)$ a metric space of perturbation functions. Let $B_{r}(g) := \{g' \in \mathcal{G}:  d(g, g') \leq r\}$ denote a closed ball of radius $r$ centered around $g \in \mathcal{G}$. We say that $\mathcal{G}' \subset \mathcal{G}$ is $r$-\emph{Nice} w.r.t. $\mathcal{H}$, if for all $x \in \mathcal{X}$, $h \in \mathcal{H}$, and $g \in \mathcal{G}'$, there exists a $g^* \in \mathcal{G}$, such that $g \in B_r(g^*)$ and  $h(g(x)) = h(g'(x))$ for all $g' \in B_r(g^*)$. 
\end{definition}

Definition \ref{def: r-nice} prevents a situation where a hypothesis $h \in \mathcal{H}$ is non-robust to an isolated perturbation  function $g \in \mathcal{G}'$ for any given labelled example $(x, y) \in \mathcal{X} \times \mathcal{Y}$. If a hypothesis $h$ is non-robust to a perturbation $g \in \mathcal{G'}$, then Definition \ref{def: r-nice} asserts that there must exist a small ball of perturbation functions in $\mathcal{G}$ over which $h$ is also non-robust. Next, we define the covering number.

\begin{definition} [Covering Number]
Let $(\mathcal{M}, d)$ be a metric space, let $\mathcal{K} \subset \mathcal{M}$ be a subset, and $r > 0$. Let $B_r(x) = \{x' \in \mathcal{M}:  d(x, x') \leq r\}$ denote the ball of radius $r$ centered around $x \in \mathcal{M}$. A subset $\mathcal{C} \subset \mathcal{M}$ is an $r$-covering of $\mathcal{K}$ if $\mathcal{K} \subset \bigcup_{c \in \mathcal{C}}B_r(c)$. The \emph{covering number} of $\mathcal{K}$, denoted $\mathcal{N}_r(\mathcal{K}, d)$, is the smallest cardinality of any $r$-covering of $\mathcal{K}$. 
\end{definition}

Finally, let $\mathcal{G}'_{2r} = \bigcup_{g \in \mathcal{G}'}B_{2r}(g)$ denote the union over all balls of radius $2r$ with centers in $\mathcal{G}'$. Theorem \ref{thm:tolerant} then states that if there exists a set $\mathcal{G}' \subset \mathcal{G}$ that is $r$-Nice w.r.t. $\mathcal{H}$, then Tolerantly Robust PAC learning is possible via RERM with sample complexity that scales logarithmically with $\mathcal{N}_{r}(\mathcal{G}'_{2r}, d)$. In Appendix \ref{app:relaxedcomp}, we give a full proof and show that \emph{$\ell_p$ balls are $r$-Nice perturbation sets for robustly learning halfspaces}. 

\begin{theorem} [Tolerantly Robust PAC learning under Nice Perturbations]
\label{thm:tolerant}
Let $\mathcal{H} \subset \mathcal{Y}^{\mathcal{X}}$ be a hypothesis class and $(\mathcal{G}, d)$ be a metric space of perturbation functions. If there exists a subset $\mathcal{G}' \subset \mathcal{G}$ such that $\mathcal{G}'$ is $r$-Nice w.r.t. $\mathcal{H}$, then the proper learning rule $\mathcal{A}(S) = \emph{\text{RERM}}(S; \mathcal{G})$, for any  distribution $\mathcal{D}$ over $\mathcal{X} \times \mathcal{Y}$, achieves, with probability at least $1 - \delta$ over a sample $S \sim \mathcal{D}^n$ of size $n \geq O\left(\frac{\text{VC}(\mathcal{H})\log(\mathcal{N}_{r}(\mathcal{G}'_{2r}, d)) + \ln(\frac{1}{\delta})}{\epsilon^2}\right)$, the guarantee  
$$R_{\mathcal{G}'}(\mathcal{A}(S); \mathcal{D}) \leq  \inf_{h \in \mathcal{H}}R_{\G}(h;\mathcal{D}) + \epsilon.$$
%
\end{theorem}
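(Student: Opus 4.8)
The plan is to derive Theorem~\ref{thm:tolerant} directly from Sandwich Uniform Convergence (Lemma~\ref{lem:sand}), applied with $\ell_1(h,(x,y)) = \ell_{\mathcal{G}'}(h,(x,y))$, $\ell_2(h,(x,y)) = \ell_{\mathcal{G}}(h,(x,y))$, and a sandwiched loss $\tilde{\ell}$ built from a finite cover of $\mathcal{G}'_{2r}$. Since $\mathcal{G}' \subseteq \mathcal{G}$, the chain $\ell_{\mathcal{G}'}(h,(x,y)) \leq \ell_{\mathcal{G}}(h,(x,y)) \leq 1$ holds pointwise for every $h \in \mathcal{H}$ and $(x,y) \in \mathcal{X}\times\mathcal{Y}$, so the monotonicity hypothesis of Lemma~\ref{lem:sand} is satisfied, and the learning rule there, $\argmin_{h\in\mathcal{H}}\hat{\mathbbm{E}}_S[\ell_2(h,(x,y))]$, is exactly $\text{RERM}(S;\mathcal{G})$. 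Thus the entire task reduces to producing a loss $\tilde{\ell}$ with $\ell_{\mathcal{G}'} \leq \tilde{\ell} \leq \ell_{\mathcal{G}}$ that enjoys uniform convergence with sample complexity of the claimed order, and then reading off the conclusion.

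For $\tilde{\ell}$ I would take $\tilde{\ell}(h,(x,y)) = \ell_{\mathcal{C}}(h,(x,y))$, where $\mathcal{C} \subseteq \mathcal{G}$ is a minimal $r$-covering of $\mathcal{G}'_{2r}$, so $|\mathcal{C}| = \mathcal{N}_r(\mathcal{G}'_{2r}, d)$ (finite, else the bound is vacuous). The upper bound $\ell_{\mathcal{C}} \leq \ell_{\mathcal{G}}$ is immediate from $\mathcal{C} \subseteq \mathcal{G}$. For the lower bound $\ell_{\mathcal{G}'} \leq \ell_{\mathcal{C}}$, fix $h \in \mathcal{H}$ and $(x,y)$ and suppose $\ell_{\mathcal{G}'}(h,(x,y)) = 1$, i.e.\ there is $g \in \mathcal{G}'$ with $h(g(x)) \neq y$. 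By $r$-Niceness (Definition~\ref{def: r-nice}) there is $g^* \in \mathcal{G}$ with $g \in B_r(g^*)$ and $h(g'(x)) = h(g(x)) \neq y$ for all $g' \in B_r(g^*)$. Since $d(g^*,g) \leq r$ and $g \in \mathcal{G}'$, we have $g^* \in \bigcup_{g'\in\mathcal{G}'} B_r(g') \subseteq \mathcal{G}'_{2r}$, so some cover point $c \in \mathcal{C}$ satisfies $d(c,g^*) \leq r$, i.e.\ $c \in B_r(g^*)$, whence $h(c(x)) = h(g(x)) \neq y$ and $\ell_{\mathcal{C}}(h,(x,y)) = 1$. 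This establishes $\ell_{\mathcal{G}'} \leq \ell_{\mathcal{C}} \leq \ell_{\mathcal{G}}$.

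Because $\mathcal{C}$ is finite, Lemma~1.1 of \citet{attias2021improved} bounds the VC dimension of the loss class $\mathcal{L}_{\mathcal{C}}^{\mathcal{H}} = \{(x,y)\mapsto \ell_{\mathcal{C}}(h,(x,y)): h\in\mathcal{H}\}$ by $O(\text{VC}(\mathcal{H})\log|\mathcal{C}|)$, so by Vapnik's ``General Learning'' the loss $\ell_{\mathcal{C}}$ enjoys uniform convergence with sample complexity $n(\epsilon,\delta) = O\!\left(\frac{\text{VC}(\mathcal{H})\log \mathcal{N}_r(\mathcal{G}'_{2r}, d) + \ln(1/\delta)}{\epsilon^2}\right)$. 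Feeding $\ell_1 = \ell_{\mathcal{G}'}$, $\ell_2 = \ell_{\mathcal{G}}$, $\tilde{\ell} = \ell_{\mathcal{C}}$ into Lemma~\ref{lem:sand} then gives, with probability $1-\delta$ over $S\sim\mathcal{D}^n$ for $n \geq n(\epsilon/2,\delta/2) + O(\ln(1/\delta)/\epsilon^2)$ — which is of the stated order since halving $\epsilon$ and $\delta$ only changes constants — the inequality $\mathbbm{E}_{\mathcal{D}}[\ell_{\mathcal{G}'}(\mathcal{A}(S),(x,y))] - \inf_{h\in\mathcal{H}}\mathbbm{E}_{\mathcal{D}}[\ell_{\mathcal{G}}(h,(x,y))] \leq \epsilon$ for $\mathcal{A}(S) = \text{RERM}(S;\mathcal{G})$, i.e.\ $R_{\mathcal{G}'}(\mathcal{A}(S);\mathcal{D}) \leq \inf_{h\in\mathcal{H}} R_{\mathcal{G}}(h;\mathcal{D}) + \epsilon$, as claimed.

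The step I expect to be the crux is the lower-bound half of the sandwich, where $r$-Niceness is used: one must ensure the cover point $c$ actually lands inside the ball $B_r(g^*)$ on which $h$ is guaranteed constant. This is precisely why it is necessary to cover a superset of all admissible centers $g^*$ (the set $\mathcal{G}'_{2r}$, which contains $\bigcup_{g'\in\mathcal{G}'}B_r(g') \ni g^*$) and to cover it at the \emph{same} scale $r$ appearing in Definition~\ref{def: r-nice}. The remaining ingredients — the trivial upper bound, the finite-cover VC estimate, and the application of Lemma~\ref{lem:sand} — are routine.
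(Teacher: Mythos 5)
Your proposal is correct and follows essentially the same route as the paper's proof: sandwich $\ell_{\mathcal{G}'} \leq \ell_{\tilde{\mathcal{G}}} \leq \ell_{\mathcal{G}}$ with $\tilde{\mathcal{G}}$ a minimal $r$-cover of $\mathcal{G}'_{2r}$, use $r$-Niceness for the lower bound, invoke the finite-perturbation-set VC bound of \citet{attias2021improved} for uniform convergence of $\ell_{\tilde{\mathcal{G}}}$, and conclude via Lemma \ref{lem:sand}. Your articulation of why a cover point must land inside $B_r(g^*)$ (because the center $g^*$ itself lies in $\mathcal{G}'_{2r}$ and is covered at scale $r$) is in fact a cleaner statement of the step the paper words more loosely.
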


\section{Conclusion}
In this work,  we show that there exists natural robust loss relaxations for which finite VC dimension is still not sufficient for proper learning. On the other hand, we identify a large set of Lipschitz robust loss relaxations for which finite VC dimension is sufficient for proper learnability. In addition, we give new generalization guarantees for the adversarially robust empirical risk minimizer.  As future work, we are interested in understanding whether our robust loss relaxations can be used to mitigate the tradeoff between achieving adversarial robustness and maintaining high nominal performance. 

\bibliographystyle{plainnat}
\bibliography{sample}

\newpage
\appendix

\section{Equivalence between Adversarial Robustness Models}
\label{app:eq}
We show that the perturbation set and perturbation function models are equivalent. 

\begin{theorem} [Equivalence between $\mathcal{G}$ and $\mathcal{U}$]
Let $\mathcal{X}$ be an arbitrary domain. There exists a perturbation set $\U: \mathcal{X} \rightarrow 2^{\mathcal{X}}$ if and only if there exists a set of perturbation functions $\mathcal{G}$ s.t. $\mathcal{G}(x) = \{g(x): g \in \mathcal{G}\} = \U(x)$ for all $x \in \mathcal{X}$.
\end{theorem}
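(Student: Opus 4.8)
The plan is to prove both directions straight from the definitions; the forward direction is a one-line identification, and the only direction with any content is building a function class $\mathcal{G}$ out of a given perturbation set $\mathcal{U}$.

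\textbf{($\Leftarrow$)} Suppose we are given a set of perturbation functions $\mathcal{G} \subseteq \mathcal{X}^{\mathcal{X}}$. I would simply define $\mathcal{U} : \mathcal{X} \to 2^{\mathcal{X}}$ by $\mathcal{U}(x) := \mathcal{G}(x) = \{g(x) : g \in \mathcal{G}\}$. This is a perturbation set by definition, and the required identity $\mathcal{G}(x) = \mathcal{U}(x)$ for all $x \in \mathcal{X}$ holds trivially by construction.

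\textbf{($\Rightarrow$)} Suppose we are given $\mathcal{U} : \mathcal{X} \to 2^{\mathcal{X}}$, where each $\mathcal{U}(x)$ is nonempty (the standard convention $x \in \mathcal{U}(x)$ guarantees this). I would take $\mathcal{G}$ to be the set of all \emph{selection functions} for $\mathcal{U}$, i.e.
$$\mathcal{G} := \Big\{\, g : \mathcal{X} \to \mathcal{X} \;\Big|\; g(x) \in \mathcal{U}(x) \text{ for all } x \in \mathcal{X} \,\Big\} \;=\; \prod_{x \in \mathcal{X}} \mathcal{U}(x).$$
I then need to check $\mathcal{G}(x) = \mathcal{U}(x)$ for every $x$. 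The inclusion $\mathcal{G}(x) \subseteq \mathcal{U}(x)$ is immediate, since every $g \in \mathcal{G}$ satisfies $g(x) \in \mathcal{U}(x)$ by definition. For the reverse inclusion, fix $x_0 \in \mathcal{X}$ and $z \in \mathcal{U}(x_0)$; I must produce some $g \in \mathcal{G}$ with $g(x_0) = z$. To do this, fix any global selection $s \in \prod_{x \in \mathcal{X}} \mathcal{U}(x)$ (one exists by the axiom of choice, or explicitly $s(x) = x$ under the convention $x \in \mathcal{U}(x)$), and define $g(x_0) := z$ and $g(x) := s(x)$ for $x \neq x_0$. Then $g \in \mathcal{G}$ and $g(x_0) = z$, so $z \in \mathcal{G}(x_0)$, giving $\mathcal{U}(x_0) \subseteq \mathcal{G}(x_0)$.

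The only point I expect to be worth flagging — hardly an obstacle — is the existence of at least one element of $\mathcal{G}$, i.e. a nonempty product $\prod_x \mathcal{U}(x)$; this is exactly where (a consequence of) the axiom of choice, or the benign convention that $x \in \mathcal{U}(x)$, is used. Once a single selection function is in hand, extending it to hit a prescribed value $z$ at a single coordinate is pure bookkeeping, and the rest of the argument is definitional. I would close by noting this convention is already implicit in the worst-case robust loss $\ell_{\mathcal{G}}$, so it costs nothing.
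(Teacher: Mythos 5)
Your proof is correct and essentially the same as the paper's: the paper also fixes an arbitrary selection $\tilde{z}_x \in \U(x)$ and defines $\mathcal{G}$ as the one-point modifications $g^x_z(t) = z\one\{t=x\} + \tilde{z}_t\one\{t\neq x\}$, which is exactly the witness you build for the inclusion $\U(x_0) \subseteq \mathcal{G}(x_0)$. Taking $\mathcal{G}$ to be the full product of all selections rather than just these one-point modifications is an immaterial difference, and both arguments rest on the same nonemptiness assumption on each $\U(x)$.
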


\begin{proof} We first show that every set of perturbation functions $\G$ induces a perturbation set $\mathcal{U}$. Let $\G$ be an arbitrary set of perturbation functions $g: \mathcal{X} \to \mathcal{X}$. Then, for each $x \in \mathcal{X}$,  define $\U(x) := \{g(x): g \in \mathcal{G}\}$, which completes the proof of this direction. 

Now we will show the converse - every perturbation set $\U$ induces a point-wise equivalent set $\mathcal{G}$ of perturbation functions. Let $\U$ be an arbitrary perturbation set mapping points in $\mathcal{X}$ to subsets in $\mathcal{X}$. Assume that $\U(x)$ is not empty for all $x \in \mathcal{X}$. Let $\tilde{z}_x$ denote an arbitrary perturbation from $\U(x)$.  For every $x \in \mathcal{X}$, and every $z \in \U(x)$, define the perturbation function $g^x_z(t) = z\mathbbm{1}\{t = x\} + \tilde{z}_t\mathbbm{1}\{t \neq x\}$ for $t \in \mathcal{X}$. Observe that $g_z^x(x) = z \in \U(x)$ and $g_z^x(x') = \tilde{z}_{x'} \in \U(x')$. Finally, let $\mathcal{G} = \bigcup_{x \in \mathcal{X}} \bigcup_{z \in \U(x)} \{g_z^x\}$. To verify that $\mathcal{G} = \U$, consider an arbitrary point $x' \in \mathcal{X}$. Then, 
\begin{align*}
\mathcal{G}(x') &= \bigcup_{x \in \mathcal{X}} \bigcup_{z \in \U(x)} \{g_z^x(x')\}\\
&= \left(\bigcup_{z \in \U(x')} \{g_z^{x'}(x')\}\right) \cup \left(\bigcup_{x \in \mathcal{X} \setminus x'} \bigcup_{z \in \U(x)} \{g_z^x(x')\}\right)\\
&= \left(\bigcup_{z \in \U(x')} \{z\}\right) \cup \left(\bigcup_{x \in \mathcal{X} \setminus x'} \bigcup_{z \in \U(x)} \{\tilde{z}_{x'}\}\right)\\
&= \U(x') \cup \tilde{z}_{x'}\\
&= \U(x').
\end{align*}
as needed. 
\end{proof}

\section{Proofs for Section \ref{sec:relaxedloss}}
\subsection{Proof of Theorem \ref{thm:learnlip}}
\begin{proof} (of Theorem \ref{thm:learnlip})
Let $\text{VC}(\mathcal{H}) = d$ and $S = \{(x_1, y_1), ..., (x_m, y_m)\}$ an i.i.d. sample of size $m$ from $\mathcal{D}$. Consider the learning algorithm $\mathcal{A}(S) = \argmin_{h \in \mathcal{H}}\hat{\mathbbm{E}}_S\left[\ell_{\mathcal{G}, \mu}(h, (x, y)) \right]$. Note that $\mathcal{A}$ is a proper learning algorithm. Let $\hat{h} = \mathcal{A}(S)$ denote hypothesis output by $\mathcal{A}$ and $h^* = \inf_{h \in \mathcal{H}}\mathbbm{E}_{\mathcal{D}}\left[\ell_{\G, \mu}(h, (x, y)) \right]$. 

We now show that if the sample size $m = O\left(\frac{dL^2 \ln(\frac{L}{\epsilon}) + \ln(\frac{1}{\delta})}{\epsilon^2}\right)$, then $\hat{h}$ achieves the stated generalization bound with probability $1 - \delta$. By Lemma \ref{lem: uclip}, if $m = O\left(\frac{dL^2 \ln(\frac{L}{\epsilon}) + \ln(\frac{1}{\delta})}{\epsilon^2}\right)$, we have that with probability $1 - \delta$, for all $h \in \mathcal{H}$ simultaneously, 
$$\left|\mathbbm{E}_{\mathcal{D}}\left[\ell_{\G, \mu}(h, (x, y)) \right] - \hat{\mathbbm{E}}_{S}\left[\ell_{\G, \mu}(h, (x, y)) \right]\right| \leq \frac{\epsilon}{2}.$$
This means that both $\mathbbm{E}_{\mathcal{D}}\left[\ell_{\G, \mu}(\hat{h}, (x, y)) \right] - \hat{\mathbbm{E}}_{S}\left[\ell_{\G, \mu}(\hat{h}, (x, y)) \right] \leq \frac{\epsilon}{2}$ and $ \hat{\mathbbm{E}}_{S}\left[\ell_{\G, \mu}(h^*, (x, y)) \right] - \mathbbm{E}_{\mathcal{D}}\left[\ell_{\G, \mu}(h^*, (x, y)) \right] \leq \frac{\epsilon}{2}$. By definition of $\hat{h}$, note that $\hat{\mathbbm{E}}_{S}\left[\ell_{\G, \mu}(\hat{h}, (x, y)) \right] \leq \hat{\mathbbm{E}}_{S}\left[\ell_{\G, \mu}(h^*, (x, y)) \right]$. Putting these observations together, we have that 
\begin{align*}
\mathbbm{E}_{\mathcal{D}}\left[\ell_{\G, \mu}(\hat{h}, (x, y)) \right] - (\mathbbm{E}_{\mathcal{D}}\left[\ell_{\G, \mu}(h^*, (x, y)) \right]+ \frac{\epsilon}{2}) &\leq \mathbbm{E}_{\mathcal{D}}\left[\ell_{\G, \mu}(\hat{h}, (x, y)) \right] - \hat{\mathbbm{E}}_{S}\left[\ell_{\G, \mu}(h^*, (x, y)) \right]\\
&\leq \mathbbm{E}_{\mathcal{D}}\left[\ell_{\G, \mu}(\hat{h}, (x, y)) \right] - \hat{\mathbbm{E}}_{S}\left[\ell_{\G, \mu}(\hat{h}, (x, y)) \right]\\
&\leq \frac{\epsilon}{2},
\end{align*}
from which we can deduce that 
$$\mathbbm{E}_{\mathcal{D}}\left[\ell_{\G, \mu}(\hat{h}, (x, y)) \right] - \inf_{h \in \mathcal{H}}\mathbbm{E}_{\mathcal{D}}\left[\ell_{\G, \mu}(h, (x, y)) \right]  \leq \epsilon.$$

Thus, $\mathcal{A}$ achieves the stated generalization bound with sample complexity $m = O\left(\frac{dL^2 \ln(\frac{L}{\epsilon}) + \ln(\frac{1}{\delta})}{\epsilon^2}\right)$, completing the proof.
\end{proof}

\subsection{Proof of Theorem \ref{thm:notnec}}
\label{app:notnec}
For the proof in this section, it will be useful to define the $(\mathcal{G}, \mu)$-smoothed hypothesis class $\mathcal{H}$:
$$\mathcal{F}^{\mathcal{H}}_{\mathcal{G}, \mu} := \{\mathbbm{E}_{g \sim \mu}\left[h(g(x))\right]: h \in \mathcal{H}\}.$$
\begin{proof} (of Theorem \ref{thm:notnec}) Let $\mathcal{X} = \mathbbm{R}$ and $\mathcal{H} = \{\text{sign}(\sin(\omega x)): \omega \in \mathbbm{R}\}$. Without loss of generality, assume $\text{sign}(\sin(0)) = 1$. For every $x \in \mathcal{X}$ and $c \in [-1, 1]$, define $g_c(x) = cx$. Then, let $\mathcal{G} = \{g_c: c \in [-1, 1]\}$ and $\mu$ be uniform over $\mathcal{G}$. First, $\text{VC}(\mathcal{H}) = \infty$ as desired. Next, to show learnability, it suffices to show that the loss 

$$\ell_{\G, \mu}(h, (x, y)) = \ell(y\mathbbm{E}_{g \sim \mu}\left[h(g(x))\right]). $$
enjoys the uniform convergence property despite $\text{VC}(\mathcal{H}) = \infty$. By Theorem \ref{thm:rad} and similar to the proof of Lemma \ref{lem: uclip}, it suffices upperbound the Rademacher complexity of the loss class $\mathcal{L}^{\mathcal{H}}_{\mathcal{G}, \mu} = \{(x, y) \mapsto \ell_{\G, \mu}(h, (x, y)): h \in \mathcal{H}\}$. Since for every fixed $y$, $\ell_{\mathcal{G}, \mu}(h, (x, y))$ is $L$-Lipschitz w.r.t the real-valued function $\mathbbm{E}_{g \sim \mu}\left[h(g(x)) \right]$, by Ledoux-Talagrand's contraction principle $\hat{\mathfrak{R}}_m(\mathcal{L}_{\mathcal{G}, \mu}^{\mathcal{H}}) \leq L \cdot \hat{\mathfrak{R}}_m(\mathcal{F}_{\mathcal{G}, \mu}^{\mathcal{H}})$ where $\mathcal{F}_{\mathcal{G}, \mu}^{\mathcal{H}}$ is the $(\mathcal{G}, \mu)$-smoothed hypothesis classed defined previously. Thus, it suffices to upper-bound $\hat{\mathfrak{R}}_m(\mathcal{F}_{\mathcal{G}, \mu}^{\mathcal{H}})$ by a sublinear function of $m$ to show that $\ell_{\mathcal{G}, \mu}(h, (x, y))$ enjoys the uniform convergence property. But for every $h_{\omega} \in \mathcal{H}$, 

$$\mathbbm{E}_{g \sim \mu}\left[h_{\omega}(g(x)) \right] = \mathbbm{E}_{c \sim \text{Unif}(-1, 1)}\left[\text{sign}(\sin(\omega(cx))\right] = \frac{1}{2}\int_{-1}^{1}\text{sign}(\sin(c(\omega x))) dc.$$

Since $\sin(ax)$ is an odd function, $\text{sign}(\sin(ax))$ is also odd, from which it follows that for all $h_{\omega} \in \mathcal{H}$:
$$\mathbbm{E}_{g \sim \mu}\left[h_{\omega}(g(x)) \right] = \begin{cases}
0 & \text{if $x \neq 0 \text{ and }\omega \neq 0$}\\
1 & \text{otherwise}
\end{cases}.$$

Therefore, $\mathcal{F}_{\mathcal{G}, \mu}^{\mathcal{H} } = \{f_1, f_2\}$ where $f_1(x) = 1$ for all $x\in \mathbbm{R}$ and $f_2(x) = 1$ if $x = 0$ and $f_2(x) = 0$ if $x \neq 0$. Since $\mathcal{F}_{\mathcal{G}, \mu}^{\mathcal{H}}$ is finite, by Massart's Lemma \citep{mohri2018foundations}, $\hat{\mathfrak{R}}_m(\mathcal{F}_{\mathcal{G}, \mu}^{\mathcal{H}})$ is upper-bounded by a sublinear function of $m$ such that $\ell_{\mathcal{G}, \mu}(h, (x, y))$ enjoys the uniform convergence property with sample complexity $O(\frac{L^2 + \ln(\frac{1}{\delta})}{\epsilon^2})$. Therefore, $(\mathcal{H}, \mathcal{G}, \mu)$ is PAC learnable w.r.t $\ell_{\G, \mu}(h, (x, y))$ by the learning rule $\mathcal{A}(S) = \argmin_{h \in \mathcal{H}}\hat{\mathbbm{E}}_S\left[\ell_{\G, \mu}(h, (x, y)) \right]$ with sample complexity that scales according to $O(\frac{L^2 + \ln(\frac{1}{\delta})}{\epsilon^2})$.
\end{proof}

\section{Proofs for Section \ref{sec:relaxedlossnothelp}}
\subsection{Proper $\rho$-Probabilistically Robust PAC Learning}
\label{app:finiteG}

We show that if  $\mathcal{G}$ is \textit{finite} then VC classes are $\rho$-probabilistically robustly learnable.

\begin{theorem}[Proper $\rho$-Probabilistically Robust PAC Learner]
\label{thm:finiteG}
For every hypothesis class $\mathcal{H}$, threshold $\rho \in [0, 1)$, and adversary $(\mathcal{G}, \mu)$ s.t. $|\mathcal{G}| \leq K$, there exists a proper learning rule $\mathcal{A}: (\mathcal{X} \times \mathcal{Y})^n \rightarrow \mathcal{H}$ such that for every distribution $\mathcal{D}$ over $\mathcal{X} \times \mathcal{Y}$, with probability at least $1 - \delta$ over $S \sim \mathcal{D}^n$, algorithm $\mathcal{A}$ achieves
$$R^{\rho}_{\mathcal{G}, \mu}(\mathcal{A}(S); \mathcal{D}) \leq \inf_{h \in \mathcal{H}}R^{\rho}_{\G, \mu}(h;\mathcal{D}) + \epsilon$$
with 
$$n(\epsilon, \delta, \rho; \mathcal{H}, \mathcal{G}, \mu) = O\left(\frac{\text{VC}(\mathcal{H})\ln(K) + \ln(\frac{1}{\delta})}{\epsilon^2}\right)$$
samples.
\end{theorem}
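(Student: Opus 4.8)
The plan is to take the proper learning rule to be the empirical risk minimizer for the $\rho$-probabilistically robust loss itself, $\mathcal{A}(S) = \argmin_{h \in \mathcal{H}} \hat{\mathbbm{E}}_S\!\left[\ell^{\rho}_{\mathcal{G},\mu}(h, (x,y))\right]$, which manifestly outputs a hypothesis in $\mathcal{H}$ and is hence proper. Since $\ell^{\rho}_{\mathcal{G},\mu}$ is $\{0,1\}$-valued, the loss class $\mathcal{L}^{\mathcal{H},\rho}_{\mathcal{G},\mu} = \{(x,y) \mapsto \ell^{\rho}_{\mathcal{G},\mu}(h,(x,y)) : h \in \mathcal{H}\}$ is a genuine binary concept class, so its VC dimension is well-defined. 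The strategy is then exactly the ``General Learning'' recipe recalled in Section~2.2: if we can show $\text{VC}(\mathcal{L}^{\mathcal{H},\rho}_{\mathcal{G},\mu}) = O(\text{VC}(\mathcal{H})\ln K)$, then the standard VC-based uniform convergence bound for binary loss classes \citep{vapnik1971uniform,vapnik2006estimation} gives $\bigl|\hat{\mathbbm{E}}_S[\ell^{\rho}_{\mathcal{G},\mu}(h,\cdot)] - R^{\rho}_{\mathcal{G},\mu}(h;\mathcal{D})\bigr| \le \epsilon/2$ uniformly over $h \in \mathcal{H}$ with $n = O\bigl((\text{VC}(\mathcal{H})\ln K + \ln(1/\delta))/\epsilon^{2}\bigr)$ samples, and the usual ERM argument (as in the proof of Theorem~\ref{thm:learnlip}) turns this into the claimed agnostic guarantee.

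The technical heart is the VC bound on the loss class, and this is where finiteness of $\mathcal{G}$ is used. The key observation is that for any labeled example $(x,y)$ the value $\ell^{\rho}_{\mathcal{G},\mu}(h,(x,y)) = \mathbbm{1}\{\mathbbm{P}_{g\sim\mu}[h(g(x)) \neq y] > \rho\}$ depends on $h$ only through the finite tuple $(h(g(x)))_{g \in \mathcal{G}} \in \{-1,1\}^{K}$. Consequently, for any sample $S = \{(x_1,y_1),\dots,(x_n,y_n)\}$, the set of loss-behaviors that $\mathcal{H}$ realizes on $S$ is a function of the set of labelings $\mathcal{H}$ realizes on the ``expanded'' point set $\tilde{S} = \{g(x_i) : i \in [n],\ g \in \mathcal{G}\} \subseteq \mathcal{X}$, which satisfies $|\tilde{S}| \le nK$. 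Hence the growth function of $\mathcal{L}^{\mathcal{H},\rho}_{\mathcal{G},\mu}$ at $n$ is at most the growth function of $\mathcal{H}$ at $nK$, which by Sauer--Shelah is at most $(enK/d)^{d}$ with $d = \text{VC}(\mathcal{H})$. If $S$ were shattered by the loss class we would need $2^{n} \le (enK/d)^{d}$, and a routine manipulation of this inequality forces $n = O(d\ln K)$. This is precisely the logarithmic blow-up recorded in Lemma~1.1 of \citet{attias2021improved}, instantiated with the finite perturbation set $\mathcal{G}$, so one may alternatively just invoke that lemma.

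I expect the only mildly delicate point to be the reduction in the second paragraph, namely pinning down that the loss value depends solely on the $K$-tuple $(h(g(x)))_{g \in \mathcal{G}}$ and therefore that shattering $n$ points by the loss class costs no more than shattering $nK$ points by $\mathcal{H}$; once that is clear, the Sauer--Shelah counting and the VC PAC bound are entirely standard, and there is no further obstacle. (One should also handle the trivial edge case $nK < d$ separately, but there $n < d = O(d\ln K)$ already, so it imposes no constraint.) Putting the two pieces together — finite VC dimension of the loss class, then the VC PAC bound for the proper ERM — yields the stated sample complexity $n(\epsilon,\delta,\rho;\mathcal{H},\mathcal{G},\mu) = O\bigl((\text{VC}(\mathcal{H})\ln K + \ln(1/\delta))/\epsilon^{2}\bigr)$.
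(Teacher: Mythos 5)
Your proposal is correct and takes essentially the same route as the paper's own proof: run ERM on the $\rho$-probabilistically robust loss, bound the VC dimension of the binary loss class by noting that loss behaviors on $S$ are determined by $\mathcal{H}$'s labelings of the inflated set of at most $nK$ perturbed points (Sauer--Shelah, equivalently Lemma~1.1 of \citet{attias2021improved}), and then apply standard VC uniform convergence plus the usual ERM argument to get the stated sample complexity.
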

\begin{proof}
Fix $\rho \in (0, 1)$. Our main strategy will be to upper bound the VC dimension of the $\rho$-probabilistically robust loss class by some function of the VC dimension of $\mathcal{H}$. Then, finite VC dimension of $\mathcal{H}$ implies finite VC dimension of the loss class, which ultimately implies uniform convergence over the $\rho$-probabilistically robust loss. Finally,  uniform convergence of $\ell^{\rho}_{\G, \mu}(h, (x, y))$ implies that ERM is sufficient for $\rho$-probabilistically robust PAC learning. To that end, define

$$\mathcal{L}_{\mathcal{G}, \mu}^{\mathcal{H}, \rho} = \{(x, y) \mapsto \one \{\mathbbm{P}_{g \sim \mu}\left( h(g(x)) \neq y \right) > \rho \}: h \in \mathcal{H}\}$$

as the $\rho$-probabilistically robust loss class of $\mathcal{H}$. Let $S = \{(x_1, y_1), ...., (x_n, y_n)\} \in (\mathcal{X} \times \mathcal{Y})^n$ be an arbitrary labeled sample of size $n$. Inflate $S$ to $S_{\mathcal{G}}$ by adding for each labelled example $(x, y) \in S$ all possible perturbed examples $(g(x), y)$ for $g \in \mathcal{G}$. That is, $S_{\mathcal{G}} = \bigcup_{(x, y) \in S} \{(g(x), y): g \in \mathcal{G}\}.$ Note that $|S_{\mathcal{G}}| \leq nK.$ Let $\mathcal{L}_{\mathcal{G}, \mu}^{\mathcal{H}, \rho}(S)$ denote the set of all possible behaviors of functions in $\mathcal{L}_{\mathcal{G}, \mu}^{\mathcal{H}, \rho}$ on $S$. Likewise, let $\mathcal{H}(S_{\mathcal{G}})$ denote the set of all possible behaviors of functions in $\mathcal{H}$ on the inflated set $S_{\mathcal{G}}$. Note that each behavior in $\mathcal{L}_{\mathcal{G}, \mu}^{\mathcal{H}, \rho}(S)$ maps to at least $1$ behavior in $\mathcal{H}$. Therefore $|\mathcal{L}_{\mathcal{G}, \mu}^{\mathcal{H}, \rho}(S)| \leq |\mathcal{H}(S_{\mathcal{G}})|$. By Sauer-Shelah's lemma, $|\mathcal{H}(S_{\mathcal{G}})| \leq (nK)^{\text{VC}(\mathcal{H})}$. Solving for $n$ s.t. $(nK)^{\text{VC}(\mathcal{H})} < 2^n$ gives that $n = O(\text{VC}(\mathcal{H})\ln(K))$, ultimately implying that $\text{VC}(\mathcal{L}_{\mathcal{G}, \mu}^{\mathcal{H}, \rho}) \leq O(\text{VC}(\mathcal{H})\ln(K))$ (see Lemma 1.1 in \cite{attias2021improved}).

Since for VC classes, the VC dimension of $\mathcal{L}_{\mathcal{G}, \mu}^{\mathcal{H}, \rho}$ is bounded, by Vapnik's ``General Learning", we have that for VC classes the loss function $\ell^{\rho}_{\G, \mu}(h, (x, y))$ enjoys the uniform convergence property. Namely, let $\mathcal{D}$ be a distribution over $\mathcal{X} \times \mathcal{Y}$. For a sample of size $n \geq O(\frac{\text{VC}(\mathcal{H})\ln(K) + \ln(\frac{1}{\delta})}{\epsilon^2})$, we have that with probability at least $1 - \delta$ over $S \sim \mathcal{D}^n$, for all $h \in \mathcal{H}$

$$|\mathbbm{E}_{\mathcal{D}}\left[\ell^{\rho}_{\G, \mu}(h, (x, y)) \right] - \hat{\mathbbm{E}}_{\mathcal{S}}\left[\ell^{\rho}_{\G, \mu}(h, (x, y)) \right]| \leq \epsilon.$$

Standard arguments yield that the proper learning rule $\mathcal{A}(S) = \argmin_{h \in \mathcal{H}} \hat{\mathbbm{E}}_S\left[\ell^{\rho}_{\G, \mu}(h, (x, y)) \right]$ is a $\rho$-probabilistically robust PAC learner with sample complexity $O(\frac{\text{VC}(\mathcal{H})\ln(K) + \ln(\frac{1}{\delta})}{\epsilon^2}).$
\end{proof}

\subsection{Proof of Lemma \ref{lem2}}
\label{app:rhosecond}
\begin{proof}(of Lemma \ref{lem2}) This proof closely follows Lemma 3 from \cite{montasser2019vc}. In fact, the only difference is in the construction of the hypothesis class, which we will describe below.

Fix $\rho \in [0, 1)$. Let $m \in \mathbbm{N}$. Construct a hypothesis class $\mathcal{H}_0$ as in Lemma \ref{lem1} on $3m$ centers $c_1, ..., c_{3m}$ based on $\rho$. By the construction in Lemma \ref{lem1}, we know that $\mathcal{L}_{\mathcal{G}, \mu}^{\mathcal{H}, \rho}$ shatters the sample $C = \{(c_1, 1), ..., (c_{3m}, 1)\}$. Instead of keeping all of $\mathcal{H}_0$, we will only keep a subset $\mathcal{H}$ of $\mathcal{H}_0$, namely those classifiers that are probabilistically robustly correct on subsets of size $2m$ of $C$. More specifically, recall from the construction in Lemma \ref{lem1}, that each hypothesis $h_b \in \mathcal{H}_0$ is parameterized by a bitstring $b \in \{0, 1\}^{3m}$ where if $b_i = 1$, then $h_b$ is not robust to example $(c_i, 1)$. Therefore, $\mathcal{H} = \{h_b \in \mathcal{H}_0: \sum_{i=1}^{3m}b_i = m\}$. Now, let $\mathcal{A}: (\mathcal{X} \times \mathcal{Y})^* \rightarrow \mathcal{H}$ be an arbitrary proper learning rule. Consider a set of distributions $\mathcal{D}_1, ..., \mathcal{D}_L$ where $L = \binom{3m}{2m}$. Each distribution $\mathcal{D}_i$ is uniform over exactly $2m$ centers in $C$. Critically, note that by our construction of $\mathcal{H}$, every distribution $\mathcal{D}_i$ is probabilistically robustly realizable by a hypothesis in $\mathcal{H}$. That is, for all $\mathcal{D}_i$, there exists a hypothesis $h^* \in \mathcal{H}$ s.t. $R^{\rho}_{\G, \mu}(h^*;\mathcal{D}_i) = 0$. Observe that this satisfies the first condition in Lemma \ref{lem2}. For the second condition, at a high-level, the idea is to use the probabilistic method to show that there exists a distribution $\mathcal{D}_i$ where $\E_{S \sim \mathcal{D}_i^m}\left[ R^{\rho}_{\G, \mu}(\mathcal{A}(S);\mathcal{D})\right] \geq \frac{1}{4}$ and then use a variant of Markov's inequality to show that with probability at least $1/7$ over $S \sim \mathcal{D}^{m}$, $R^{\rho}_{\G, \mu}(\mathcal{A}(S);\mathcal{D}) > 1/8$. 

Let $S \in C^m$ be an arbitrary set of $m$ points. Let $\mathcal{C}$ be a uniform distribution over $C$. Let $\mathcal{P}$ be a uniform distribution over $\mathcal{D}_1, ..., \mathcal{D}_T$. Let $E_S$ denote the event that $S \subset \text{supp}(\mathcal{D}_i)$ for $\mathcal{D}_i \sim \mathcal{P}$. Given the event $E_S$, we will lower bound the expected probabilistic robust loss of the hypothesis the proper learning rule $\mathcal{A}$ outputs, 

$$\mathbbm{E}_{\mathcal{D}_i \sim \mathcal{P}}\left[ R^{\rho}_{\mathcal{G}, \mu}(\mathcal{A}(S); \mathcal{D}_i)| E_S\right] = \mathbbm{E}_{\mathcal{D}_i \sim \mathcal{P}}\left[\mathbbm{E}_{(x, y) \sim \mathcal{D}_i}\left[ \one \{\mathbbm{P}_{g \sim \mu}\left( \mathcal{A}(S)(g(x)) \neq y \right) > \rho \}\right] | E_S\right].$$

Conditioning on the event that $(x, y) \notin S$, denoted, $E_{(x, y) \notin S}$, 

\begin{align*}
\mathbbm{E}_{(x, y) \sim \mathcal{D}_i}\left[ \one \{\mathbbm{P}_{g \sim \mu}\left( \mathcal{A}(S)(g(x)) \neq y \right) > \rho \}\right] &\geq
\mathbbm{P}_{(x, y) \sim \mathcal{D}_i}\left[E_{(x, y) \notin S} \right] \\
&\times \mathbbm{E}_{(x, y) \sim \mathcal{D}_i}\left[\one \{\mathbbm{P}_{g \sim \mu}\left( \mathcal{A}(S)(g(x)) \neq y \right)  >\rho\}| E_{(x, y) \notin S}\right]
\end{align*}

Since $\mathcal{D}_i$ is supported over $2m$ points and $|S| = m$, $\mathbbm{P}_{(x, y) \sim \mathcal{D}_i}\left[E_{(x, y) \notin S} \right] \geq \frac{1}{2}$ since in the worst-case $S \subset \text{supp}(\mathcal{D}_i)$. Thus, we obtain the lower bound, 

$$\mathbbm{E}_{\mathcal{D}_i \sim \mathcal{P}}\left[ R^{\rho}_{\mathcal{G}, \mu}(\mathcal{A}(S); \mathcal{D}_i)| E_S\right] \geq \frac{1}{2}\mathbbm{E}_{\mathcal{D}_i \sim \mathcal{P}}\left[ \mathbbm{E}_{(x, y) \sim \mathcal{D}_i}\left[\one \{\mathbbm{P}_{g \sim \mu}\left( \mathcal{A}(S)(g(x)) \neq y \right)  >\rho\}| E_{(x, y) \notin S}\right]| E_S\right].$$

Unravelling the expectation over the draw from $\mathcal{D}_i$ given the event $E_S$, we have, 

$$\mathbbm{E}_{(x, y) \sim \mathcal{D}_i}\left[\one \{\mathbbm{P}_{g \sim \mu}\left( \mathcal{A}(S)(g(x)) \neq y \right)  >\rho\}| E_{(x, y) \notin S}\right] \geq \frac{1}{m}\sum_{(x, y) \in \text{supp}(\mathcal{D}_i) \setminus S} \one \{\mathbbm{P}_{g \sim \mu}\left( \mathcal{A}(S)(g(x)) \neq y \right)  >\rho\}$$

Observing that $\mathbbm{E}_{\mathcal{D}_i \sim \mathcal{P}}\left[\one \{(x, y) \in \text{supp}(\mathcal{D}_i)\} | E_S\right] \geq \frac{1}{2}$ yields, 

$$\mathbbm{E}_{\mathcal{D}_i \sim \mathcal{P}}\left[ \mathbbm{E}_{(x, y) \sim \mathcal{D}_i}\left[\one \{\mathbbm{P}_{g \sim \mu}\left( \mathcal{A}(S)(g(x)) \neq y \right)  >\rho\}| E_{(x, y) \notin S}\right]| E_S\right] \geq \frac{1}{2m}\sum_{(x, y) \notin S} \one \{\mathbbm{P}_{g \sim \mu}\left( \mathcal{A}(S)(g(x)) \neq y \right)  >\rho\}.$$

Since $\mathcal{A}(S) \in \mathcal{H}$, by construction of $\mathcal{H}$, there are at least $m$ points in $C$ where $\mathcal{A}$ is not probabilistically robustly correct. Therefore, 

$$\frac{1}{2m}\sum_{(x, y) \notin S} \one \{\mathbbm{P}_{g \sim \mu}\left( \mathcal{A}(S)(g(x)) \neq y \right)  >\rho\} \geq \frac{1}{2},$$

from which we have that, $\mathbbm{E}_{\mathcal{D}_i \sim \mathcal{P}}\left[ R^{\rho}_{\mathcal{G}, \mu}(\mathcal{A}(S); \mathcal{D}_i)| E_S\right] \geq \frac{1}{4}$. By the law of total expectation, we have that

\begin{align*}
\mathbbm{E}_{\mathcal{D}_i \sim \mathcal{P}}\left[\mathbbm{E}_{S \sim \mathcal{D}_i^m}\left[R^{\rho}_{\mathcal{G}, \mu}(\mathcal{A}(S); \mathcal{D}_i) \right] \right] &= \mathbbm{E}_{S \sim \mathcal{C}} \left[\mathbbm{E}_{\mathcal{D}_i \sim \mathcal{P}|E_S} \left[R^{\rho}_{\mathcal{G}, \mu}(\mathcal{A}(S); \mathcal{D}_i) \right] \right] \\
&= \mathbbm{E}_{S \sim \mathcal{C}} \left[\mathbbm{E}_{\mathcal{D}_i \sim \mathcal{P}} \left[R^{\rho}_{\mathcal{G}, \mu}(\mathcal{A}(S); \mathcal{D}_i)| E_S \right] \right]\\
&\geq 1/4
\end{align*}

Since the expectation over $\mathcal{D}_1, ..., \mathcal{D}_T$ is at least $1/4$, there must exist a distribution $\mathcal{D}_i$ where $\mathbbm{E}_{S \sim \mathcal{D}_i^m}\left[R^{\rho}_{\mathcal{G}, \mu}(\mathcal{A}(S); \mathcal{D}_i) \right] \geq 1/4$. Using a variant of Markov's inequality, gives $$\mathbbm{P}_{S \sim \mathcal{D}_i^m}\left[R^{\rho}_{\mathcal{G}, \mu}(\mathcal{A}(S); \mathcal{D}_i) > 1/8\right] \geq 1/7$$ which completes the proof. 
\end{proof}

\subsection{Proof of Theorem \ref{thm:propernotposs}}
\label{app:propernotposs}
\begin{proof} (of Theorem \ref{thm:propernotposs})
Fix $\rho \in [0, 1)$. Let $(C_m)_{m \in \mathbbm{N}}$ be an infinite sequence of disjoint sets such that each set $C_m$ contains $3m$ distinct center points from $\mathcal{X}$, where for any $c_i, c_j \in \bigcup_{m=1}^{\infty}C_m$ such that $c_i \neq c_j$, we have $\mathcal{G}(c_i) \cap \mathcal{G}(c_j) = \emptyset$. For every $m \in \mathbbm{N}$, construct $\mathcal{H}_m$ on $C_m$ as in Lemma \ref{lem1}. In addition, a key part of this proof is to ensure that the hypothesis in $\mathcal{H}_m$ are non-robust to points in $C_{m'}$ for all $m' \neq m$. To do so, we will need to adjust each hypothesis $h_b \in \mathcal{H}_m$ carefully. By definition, for every $m \in \mathbbm{N}$,  $\mathcal{H}_m$ consists of $2^{3m}$ hypothesis of the form 
 $$h_b(z) = \begin{cases}
0 & \text{if $z \in \bigcup_{i=1}^{3m} \mathcal{B}_i^b$} \cup \mathcal{B}_i \\
1 & \text{otherwise}
\end{cases}$$
for each bitstring $b \in \{0, 1\}^{3m}$.
Note that the same set $\bigcup_{i=1}^{3m} \mathcal{B}_i$ is shared across every hypothesis $h_b \in \mathcal{H}_m$. For each $m \in \mathbbm{N}$, let $\mathcal{B}^m = \bigcup_{i=1}^{3m} \mathcal{B}_i$ be exactly the union of these $3m$ sets. Next, from the construction in Lemma \ref{lem1},  for every center $c_i \in C_m$, $\mu_{c_i}\left(\mathcal{B}_i \cup \left(\bigcup_{b} \mathcal{B}_i^b\right)\right) \leq \frac{1+\rho}{2} < 1$. Thus, there exists a set $\tilde{\mathcal{B}}_i \subset \G(c_i)$ s.t. $\mu_{c_i}(\tilde{\mathcal{B}}_i) > 0$ and $\tilde{\mathcal{B}}_i \cap \left(\mathcal{B}_i \cup \left(\bigcup_{b} \mathcal{B}_i^b\right)\right) = \emptyset $. Consider one such subset $\tilde{\mathcal{B}}_i$ from each of the $3m$ centers in $C_m$ and let $\tilde{\mathcal{B}}^m = \bigcup_{i=1}^{3m} \tilde{\mathcal{B}}_i$. Finally, make the following adjustment to each $h_b \in \mathcal{H}_m$,
 $$h_b(z) = \begin{cases}
0 & \text{if $z \in \bigcup_{i=1}^{3m} \mathcal{B}_i^b$} \cup \mathcal{B}_i \text{ or $z \in \mathcal{B}^{m'} \cup \tilde{\mathcal{B}}^{m'}$ for  $m' \neq m$}\\
1 & \text{otherwise}
\end{cases}$$
One can verify that every hypothesis in $\mathcal{H}_m$ has a non-robust region (i.e. $\mathcal{B}^{m'} \cup \tilde{\mathcal{B}}^{m'}$ for  $m' \neq m$) with mass strictly bigger than $\rho$ in every center in $C_{m'}$ for every $m' \neq m$. Thus, the hypotheses in $\mathcal{H}_m$ are non-robust to points in $C_{m'}$ for all $m' \neq m$. Finally, as we did in Lemma \ref{lem2}, for each $m$, we only keep the subset of hypothesis $\mathcal{H}'_m = \{h_b \in \mathcal{H}_m: \sum_{i=1}^{3m}b_i = m\}$. Note that for each $m \in \mathbbm{N}$, the hypothesis class $\mathcal{H}'_m$ behaves exactly like the hypothesis class from Lemma \ref{lem2} on $C_m$.

Let $\mathcal{H} := \bigcup_{m=1}^{\infty} \mathcal{H}'_m$ and $\mathcal{G}(C_m) := \bigcup_{i=1}^{3m} \mathcal{G}(c_i)$. Since we have modified the hypothesis class, we need to reprove that its VC dimension is still at most $1$. Consider two points $x_1, x_2 \in \mathcal{X}$. If either $x_1$ or $x_2$ is not in $\bigcup_{m=1}^{\infty} \G(C_m)$ and not in $\bigcup_{m=1}^{\infty} \mathcal{B}^m \cup \tilde{\mathcal{B}}^m$, then all hypothesis predict $x_1$ or $x_2$ as $1$. If both $x_1$ and $x_2$ are in $\mathcal{B}^m \cup \tilde{\mathcal{B}}^m$ for some $m \in \mathbbm{N}$, then:
\begin{itemize}
\itemsep0em 
\item if either $x_1$ or $x_2$ are in $\mathcal{B}^m$, every hypothesis in $\mathcal{H}$ labels either $x_1$ or $x_2$ as 0. 
\item if both $x_1$ and $x_2$ are in $\tilde{\mathcal{B}}^m$, we can only get the labeling $(1, 1)$ from hypotheses in $\mathcal{H}_m$ and the labelling $(0, 0)$ from the hypotheses in $\mathcal{H}_{m'}$ for $m' \neq m$.
\end{itemize}
In the case both $x_1$ and $x_2$ are in $\G(C_m) \setminus (\mathcal{B}^m \cup \tilde{\mathcal{B}}^m)$, then, they cannot be shattered by Lemma \ref{lem1}. In the case $x_1 \in \mathcal{B}^m \cup \tilde{\mathcal{B}}^m$ and $x_2 \in \G(C_m) \setminus (\mathcal{B}^m \cup \tilde{\mathcal{B}}^m)$:
\begin{itemize}
\itemsep0em 
\item if $x_1$ is in $\mathcal{B}^m$, every hypothesis in $\mathcal{H}$ labels $x_1$  as 0. 
\item if $x_1$ is in $\tilde{\mathcal{B}}^m$ then, we can never get the labelling $(0, 0)$.
\end{itemize}
If $x_1 \in \mathcal{B}^i \cup \tilde{\mathcal{B}}^i$ and $x_2 \in \mathcal{B}^j \cup \tilde{\mathcal{B}}^j$ for $i \neq j$, then: 
\begin{itemize}
\itemsep0em 
\item if either $x_1$ or $x_2$ are in $\mathcal{B}^i$ or $\mathcal{B}^j$ respectively, every hypothesis in $\mathcal{H}$ labels either $x_1$ or $x_2$ as 0. 
\item if both $x_1$ and $x_2$ are in $\tilde{\mathcal{B}}^i$ and $\tilde{\mathcal{B}}^j$ respectively, we can never get the labelling $(1, 1)$.
\end{itemize}
In the case $x_1 \in \mathcal{B}^i \cup \tilde{\mathcal{B}}^i$ and $x_2 \in \G(C_j) \setminus (\mathcal{B}^j \cup \tilde{\mathcal{B}}^j)$ for $j \neq i$, then we cannot obtain the labelling $(1, 0)$. If $x_1 \in \G(C_i) \setminus (\mathcal{B}^i \cup \tilde{\mathcal{B}}^i)$ and $x_2 \in \G(C_j) \setminus (\mathcal{B}^j \cup \tilde{\mathcal{B}}^j)$ for $i \neq j$, then we cannot obtain the labelling $(0, 0)$. Since we shown that for all possible $x_1$ and $x_2$, $\mathcal{H}$ cannot shatter them, $\text{VC}(\mathcal{H}) \leq 1$.

We now use the same reasoning in \cite{montasser2019vc}, to show that no proper learning rule works. By Lemma \ref{lem2}, for any proper learning rule $\mathcal{A}: (\mathcal{X} \times \mathcal{Y})^* \rightarrow \mathcal{H}$ and for any $m \in \mathbbm{N}$, we can construct a distribution $\mathcal{D}$ over $C_m$ (which has $3m$ points from $\mathcal{X}$) where there exists a hypothesis $h^* \in \mathcal{H}'_m$ that achieves $R^{\rho}_{\G, \mu}(h^{*};\mathcal{D}) = 0$, but with probability at least $1/7$ over $S \sim \mathcal{D}^m$, $R^{\rho}_{\G, \mu}(\mathcal{A}(S);\mathcal{D}) > 1/8$. Note that it suffices to only consider hypothesis in $\mathcal{H}'_m$ because, by construction, all hypothesis in $\mathcal{H}'_{m'}$ for $m' \neq m$ are not probabilistically robust on $C_m$, and thus always achieve loss $1$ on all points in $C_m$. Thus, rule $\mathcal{A}$ will do worse if it picks hypotheses from these classes. This shows that the sample complexity of properly probabilistically robustly PAC learning $\mathcal{H}$ is arbitrarily large, allowing us to conclude that $\mathcal{H}$ is not properly learnable. 
\end{proof}

\section{Proofs for Section \ref{sec:relaxedcomp}}
\label{app:relaxedcomp}

\subsection{Proof of Lemma \ref{lem:sand}} \label{app: sanduc}
\begin{proof} (of Lemma \ref{lem:sand})
Let $\mathcal{A}(S) = \inf_{h \in \mathcal{H}} \E_S\left[\ell_2(h, (x, y))\right]$. By uniform convergence of $\tilde{\ell}(h, (x, y))$, we have that for sample size $m = n(\frac{\epsilon}{2}, \frac{\delta}{2})$, with probability at least $1 - \frac{\delta}{2}$, over a sample $S \sim \mathcal{D}^m$, for every hypothesis $h \in \mathcal{H}$ simultaneously,
$$\mathbbm{E}_\mathcal{D}\left[\tilde{\ell}(h, (x, y))\right] \leq \hat{\E}_S\left[\tilde{\ell}(h, (x, y))\right] + \frac{\epsilon}{2}.$$
In particular, this implies that for $\hat{h} = \mathcal{A}(S)$, we have
$$\mathbbm{E}_\mathcal{D}\left[\tilde{\ell}(\hat{h}, (x, y))\right] \leq \hat{\E}_S\left[\tilde{\ell}(\hat{h}, (x, y))\right] + \frac{\epsilon}{2}.$$
Since, $\ell_1(h, (x, y)) \leq \tilde{\ell}(h, (x, y)) \leq \ell_2(h, (x, y))$, we have that 
$$\mathbbm{E}_\mathcal{D}\left[\ell_1(\hat{h}, (x, y))\right] \leq \hat{\E}_S\left[\ell_2(h^*, (x, y))\right] + \frac{\epsilon}{2}$$
 where $h^* = \inf_{h \in \mathcal{H}}\E_{\mathcal{D}}\left[\ell_2(h, (x, y))\right]$. It now remains to upper bound $\hat{\E}_S\left[\ell_2(h^*, (x, y))\right]$ with high probability. However, a standard Hoeffding bound tells us that with probability $1 - \frac{\delta}{2}$ over a sample $S$ of size $O(\frac{\ln(\frac{1}{\delta})}{\epsilon^2})$, $\hat{\E}_S\left[\ell_2(h^*, (x, y))\right] \leq \mathbbm{E}_{\mathcal{D}}\left[\ell_2(h^*, (x, y))\right] + \frac{\epsilon}{2}.$ Thus, by union bound, we get that with probability at least $1 - \delta$, $\mathbbm{E}_\mathcal{D}\left[\ell_1(\hat{h}, (x, y))\right] \leq \mathbbm{E}_{\mathcal{D}}\left[\ell_2(h^*, (x, y))\right] + \epsilon,$ using a sample of size $n(\epsilon/2, \delta/2) + O(\frac{\ln(\frac{1}{\delta})}{\epsilon^2})$.  \end{proof}

\subsection{Proof of Theorem 
\ref{thm:properprobrelax}}
\label{app:properprobrelax}

\begin{proof}(of Theorem \ref{thm:properprobrelax})
Fix  $0 \leq \rho^* < \rho < 1$ and let $\mathcal{H}$ be a hypothesis class with $\text{VC}(\mathcal{H}) = d$. Let $(\G, \mu)$ be an arbitrary adversary, $\mathcal{D}$ be an arbitrary distribution over $\mathcal{X} \times \mathcal{Y}$, and $S = \{(x_1, y_1), ..., (x_m, y_m)\}$ an i.i.d. sample of size $m$. Let $\mathcal{A}(S) = \text{PRERM}(S; \G, \rho^*)$. 

 By Lemma \ref{lem:sand}, it suffices to show that there exists a loss function $\ell(h, (x, y))$ s.t. $\ell_{\mathcal{G}, \mu}^{\rho}(h, (x, y)) \leq \ell(h, (x, y)) \leq \ell^{\rho^*}_{\mathcal{G}, \mu}(h, (x, y)))$ and $\ell(h, (x, y))$ enjoys the uniform convergence property with sample complexity $n = O\left( \frac{\frac{d}{(\rho - \rho^*)^2} \ln(\frac{1}{(\rho - \rho^*) \epsilon}) + \ln(\frac{1}{\delta})}{\epsilon^2} \right)$. Consider the probabilistically robust ramp loss: 

$$\ell^{\rho, \rho^*}_{\G, \mu}(h, (x, y)) = \min(1, \max(0, \frac{\mathbbm{P}_{g \sim \mu}\left[h(g(x)) \neq y \right] - \rho^*}{\rho - \rho^*} )). $$

Figure \ref{fig:ramp_loss} visually showcases how the probabilistic robust losses at $\rho$ and $\rho^*$ sandwich the probabilistic ramp loss at $\rho, \rho^*$. 

\begin{figure}
  \centering
  \includegraphics[width=10cm]{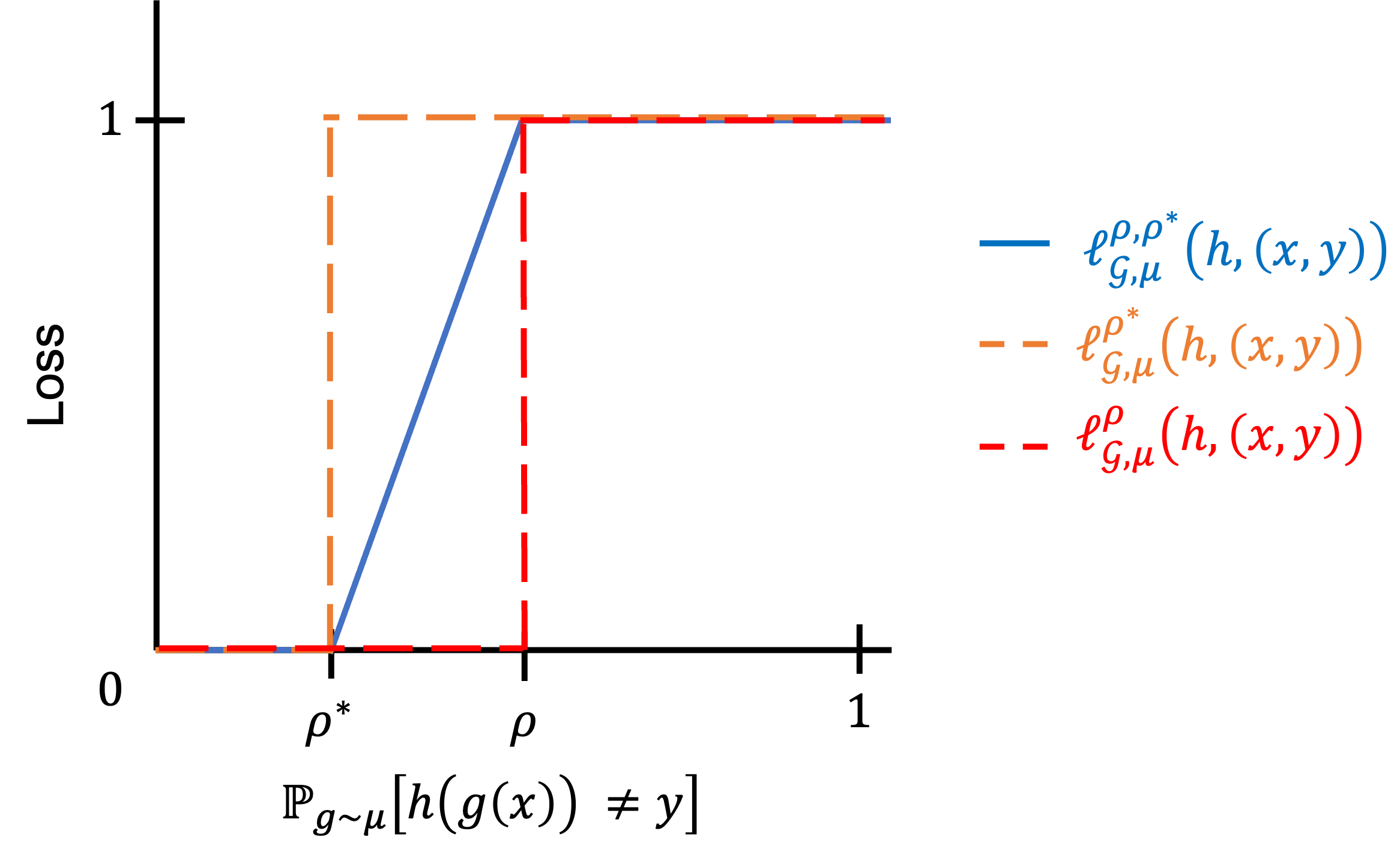}
  \caption{Comparison of probabilistic robust \textit{ramp} loss to probabilistic robust losses of hypothesis $h$ on example $(x, y)$. The probabilistic robust losses at $\rho$ and $\rho^*$ sandwich the probabilistic robust ramp loss at $\rho, \rho^*$.}
  \label{fig:ramp_loss}
\end{figure}

Its not too hard to see that $\ell_{\mathcal{G}, \mu}^{\rho}(h, (x, y)) \leq \ell^{\rho, \rho^*}_{\G, \mu}(h, (x, y)) \leq \ell^{\rho^*}_{\mathcal{G}, \mu}(h, (x, y)))$. Furthermore, since $\ell^{\rho, \rho^*}_{\G, \mu}(h, (x, y))$ is $O(\frac{1}{\rho - \rho^*})$-Lipschitz in $y\mathbbm{E}_{g \sim \mu}\left[h(g(x)) \neq y\right]$, by Lemma \ref{lem: uclip}, we have that $\ell^{\rho, \rho^*}_{\G, \mu}(h, (x, y))$ enjoys the uniform convergence property with sample complexity $O\left( \frac{\frac{d}{(\rho - \rho^*)^2} \ln(\frac{1}{(\rho - \rho^*) \epsilon}) + \ln(\frac{1}{\delta})}{\epsilon^2} \right)$. This completes the proof, as the conditions for Lemma \ref{lem:sand} have been met, and therefore the learning rule $\mathcal{A}(S) = \text{PRERM}(S; \G, \rho^*)$ enjoys the stated generalization guarantee with the specified sample complexity. 

\end{proof}

\subsection{Proof of Theorem \ref{thm:advrobrelax}}

\label{app:advroberelax}

\begin{proof}
(of Theorem \ref{thm:advrobrelax}) Fix  $0 < \rho$ and let $\mathcal{H}$ be a hypothesis class with $\text{VC}(\mathcal{H}) = d$. Let $\G$ be an arbitrary adversary, $\mathcal{D}$ be an arbitrary distribution over $\mathcal{X} \times \mathcal{Y}$, and $S = \{(x_1, y_1), ..., (x_m, y_m)\}$ an i.i.d. sample of size $m$. Let $\mathcal{A}(S) = \text{RERM}(S; \G)$. 

Fix a measure $\mu$ over $\mathcal{G}$. By Lemma \ref{lem:sand}, it suffices to show that there exists a loss function $\ell(h, (x, y))$ s.t. $\ell_{\mathcal{G}, \mu}^{\rho}(h, (x, y)) \leq \ell(h, (x, y)) \leq \ell_{\mathcal{G}}(h, (x, y)))$ and $\ell(h, (x, y))$ enjoys the uniform convergence property with sample complexity $n = O\left( \frac{\frac{d}{\rho^2} \ln(\frac{1}{\rho \epsilon}) + \ln(\frac{1}{\delta})}{\epsilon^2} \right)$. Consider the probabilistically robust ramp loss: 

$$\ell^{\rho, \rho^*}_{\G, \mu}(h, (x, y)) = \min(1, \max(0, \frac{\mathbbm{P}_{g \sim \mu}\left[h(g(x)) \neq y \right] - \rho^*}{\rho - \rho^*} )). $$

Letting $\rho^* = 0$, its not too hard to see that $\ell_{\mathcal{G}, \mu}^{\rho}(h, (x, y)) \leq \ell^{\rho, 0}_{\G, \mu}(h, (x, y)) \leq \ell_{\mathcal{G}}(h, (x, y)))$. Furthermore, since $\ell^{\rho, 0}_{\G, \mu}(h, (x, y))$ is $O(\frac{1}{\rho})$-Lipschitz in $y\mathbbm{E}_{g \sim \mu}\left[h(g(x)) \neq y\right]$, by Lemma \ref{lem: uclip}, we have that $\ell^{\rho, 0}_{\G, \mu}(h, (x, y))$ enjoys the uniform convergence property with sample complexity $O\left( \frac{\frac{d}{\rho^2} \ln(\frac{1}{\rho \epsilon}) + \ln(\frac{1}{\delta})}{\epsilon^2} \right)$. This completes the proof, as the conditions for Lemma \ref{lem:sand} have been met, and therefore the learning rule $\mathcal{A}(S)$ enjoys the stated generalization guarantee with the specified sample complexity. 
\end{proof}

\subsection{Proof of Theorem \ref{thm:tolerant}}
\label{app:tolerant}

\begin{proof} (of Theorem \ref{thm:tolerant}) Assume that there exists  a subset $\mathcal{G}' \subset \mathcal{G}$, that is $r$-Nice w.r.t. $\mathcal{H}$. By Lemma \ref{lem:sand}, it is sufficient to find a perturbation set $\tilde{\mathcal{G}}$ s.t. (1) $\ell_{\mathcal{G}'}(h, (x, y)) \leq \ell_{\tilde{\mathcal{G}}}(h, (x, y)) \leq \ell_{\mathcal{G}}(h, (x, y))$ and (2) $\ell_{\tilde{\mathcal{G}}}(h, (x, y))$ enjoys the uniform convergence property with sample complexity $O\left(\frac{\text{VC}(\mathcal{H})\log(\mathcal{N}_{r}(\mathcal{G}'_{2r}, d))\ln(\frac{1}{\epsilon})\ + \ln(\frac{1}{\delta})}{\epsilon^2}\right)$. Let $\tilde{\mathcal{G}} \subset \mathcal{G}$ be the minimal $r$-cover of $\mathcal{G}'_{2r}$ with cardinality $\mathcal{N}_{r}(\mathcal{G}'_{2r}, d)$.  By Lemma 1.1 of \cite{attias2021improved}, the loss class $\mathcal{L}_{\mathcal{H}}^{\tilde{\mathcal{G}}}$ has VC dimension at most $O(\text{VC}(\mathcal{H})\log(|\tilde{\mathcal{G}}|)) = O(\text{VC}(\mathcal{H})\log(\mathcal{N}_{r}(\mathcal{G}'_{2r})))$, implying that $\ell_{\tilde{\mathcal{G}}}(h, (x, y))$ enjoys the uniform convergence property with the previously stated sample complexity $O\left(\frac{\text{VC}(\mathcal{H})\log(\mathcal{N}_{r}(\mathcal{G}'_{2r}, d))\ln(\frac{1}{\epsilon})\ + \ln(\frac{1}{\delta})}{\epsilon^2}\right)$. Now, it remains to show that for our choice of $\tilde{\mathcal{G}}$, we have $\ell_{\mathcal{G}'}(h, (x, y)) \leq \ell_{\tilde{\mathcal{G}}}(h, (x, y)) \leq \ell_{\mathcal{G}}(h, (x, y))$. Since, $\tilde{\mathcal{G}} \subset \mathcal{G}$ ,the upperbound is trivial. Thus, we only focus on proving the lowerbound, $\ell_{\mathcal{G}'}(h, (x, y)) \leq \ell_{\tilde{\mathcal{G}}}(h, (x, y))$ for all $h \in \mathcal{H}$ and $(x, y) \in \mathcal{X} \times \mathcal{Y}$. Fix $h \in \mathcal{H}$ and $(x, y) \in \mathcal{X} \times \mathcal{Y}$. If $\ell_{\mathcal{G}'}(h, (x, y)) = 1$, then there exists a $g \in \mathcal{G}'$ s.t. $h(g(x)) \neq y$. Let $g$ denote one such perturbation function. By the $r$-Niceness property of $\mathcal{G'}$ w.r.t. $\mathcal{H}$, there must exist $B_r(g^*)$ centered at some $g^* \in \mathcal{G}$ such that $g \in B_r(g^*)$ and $h(g(x)) = h(g'(x))$ for all $g' \in B_r(g^*)$. This implies that $h(g'(x)) \neq y$ for all $g' \in B_r(g^*)$. Furthermore, since $B_{2r}(g)$ is the union of all balls of radius $r$ that contain $g$, we have that $B_r(g^*) \subset B_{2r}(g)$. From here, its not too hard to see that $B_r(g^*) \subset \mathcal{G}_{2r}'$ by definition. Finally, since $\tilde{\mathcal{G}}$ is an $r$-cover of $\mathcal{G}'_{2r}$, it must contain at least one function from $B_r(g^*)$. This completes the proof as we have shown that there exists a perturbation function $\hat{g} \in \tilde{\mathcal{G}}$ s.t. $h(\hat{g}(x)) \neq y$.
\end{proof}

\subsection{$\ell_p$ balls are $r$-Nice perturbation sets for linear classifiers}
In this section, we give a concrete example of a hypothesis class $\mathcal{H}$ and  metric space of perturbation functions $(\mathcal{G}, d)$ for which there exists an $r$-nice perturbation subset $\mathcal{G}' \subset \mathcal{G}$.  Let $\mathcal{X} = \mathbbm{R}^q$ and fix $r \in \mathbbm{R}_{\geq0}$. For the hypothesis class, consider the set of homogeneous halfspaces, $\mathcal{H} = \{h_w| w \in \mathbbm{R}^q\}$, where $h_w(x) = w^Tx$. Let $\hat{\mathcal{G}} = \{g_{\delta}: \delta \in \mathbbm{R}^q, ||\delta||_p \leq 3r\}$ where $g_{\delta}(x) = x + \delta$ for all $x \in \mathcal{X}$ and consider \textit{any} perturbation set $\mathcal{G}$ s.t. $\mathcal{G} \supset \hat{\mathcal{G}}$. That is, $\hat{\mathcal{G}}(x) = \{g(x): g \in \hat{\mathcal{G}}\}$ induces a $\ell_p$ ball of radius $3r$ around $x$. We will accordingly consider the distance metric $d(g_{\delta_1}, g_{\delta_2}) = \sup_{x \in \mathcal{X}} ||g_{\delta_1}(x) - g_{\delta_2}(x)||_p$. Restricted to the set $\hat{\mathcal{G}}$, this distance metric reduces to $d(g_{\delta_1}, g_{\delta_2}) = ||\delta_1 - \delta_2||_p = \ell_p(\delta_1, \delta_2)$ for $g_{\delta_1}, g_{\delta_2} \in  \hat{\mathcal{G}}$.  Finally, consider $\mathcal{G}' = \{g_{\tau}: \tau \in \mathbbm{R}^q, ||\tau||_p \leq r\} \subset \hat{\mathcal{G}} \subset \mathcal{G}$ which induces an $\ell_p$ ball of radius $r$ around $x$. 

We will now show that $\mathcal{G}'$ is $r$-nice perturbation set w.r.t $\mathcal{H}$. Let $x \in \mathcal{X}$, $h_w \in \mathcal{H}$, and $g_{\tau} \in \mathcal{G}'$. Let $c = h(g_{\tau}(x)) \in \{\pm1\}$. Consider the function $g_{\tau + \frac{crw}{||w||_p}}$. By definition, we have that $g_{\tau} \in B_r(g_{\tau + \frac{crw}{||w||_p}}) \subset \hat{\mathcal{G}} \subset \mathcal{G}$. To see this, observe that  $||\tau + \frac{crw}{||w||_p}||_p \leq 2r$ by the triangle inequality. Finally, it remains to show that for every $g' \in  B_r(g_{\tau + \frac{crw}{||w||_p}}) = \{g_{\tau + \frac{crw}{||w||_p} + \kappa}| \kappa \in \mathbbm{R}^d, ||\kappa||_p \leq r\}$, $h_w(g'(x)) = h_w(g_{\tau}(x)) = c$. Let $c = +1$ and consider the function $g'_{\tau + \frac{rw}{||w||_p} + \kappa} \in B_r(g_{\tau + \frac{rw}{||w||_p}})$. Note that  $w^{T}(x + \tau + \frac{rw}{||w||_p}+ \kappa) = w^T(x + \tau) + r||w||_p + w^T\kappa$. By Cauchy-Schwartz, we can lower bound  $w^T\kappa \geq -||w||_p||\kappa||_p \geq -r||w||_p$. Therefore, we have that $w^T(x +  \tau + \frac{rw}{||w||_p}+ \kappa) \geq w^T(x + \tau) > 0$, where the last inequality comes from the fact that $+1 = c = h_w(g_{\tau}) = \text{sign}(w^T(x + \tau))$. Therefore, $h(g'_{\tau + \frac{rw}{||w||_p} + \kappa}(x)) = \text{sign}(w^T(x +  \tau + \frac{rw}{||w||_p}+ \kappa)) =  \text{sign}(w^T(x + \tau)) = h(g_{\tau}(x))$ as desired. A similar proof holds when $c = -1$. Therefore, we have shown that $\mathcal{G}'$ is a $r$-nice perturbation set w.r.t $\mathcal{H}$.

We now can use Theorem \ref{thm:tolerant} to provide sample complexity guarantees on Tolerantly Robust PAC Learning with $\mathcal{G'}$ and $\mathcal{G}$. The main quantity of interest is $\log(\mathcal{N}_{r}(\mathcal{G}'_{2r}, d))$. However, note that $\mathcal{G}'_{2r} = \hat{\mathcal{G}}$. Therefore, we just need to compute $\log(\mathcal{N}_{r}(\hat{\mathcal{G}}, d)) = \log(\mathcal{N}_{r}(\{g_{\delta}: \delta \in \mathbbm{R}^q, ||\delta||_p \leq 3r\}, d))$. However, this is equal to  $\log(\mathcal{N}_{r}(\{\delta \in \mathbbm{R}^q : ||\delta||_p \leq 3r\}, \ell_p))$ using the $\ell_p$ distance metric since $g_{\delta}$ maps one-to-one to $\delta$. Using standard arguments, $\log(\mathcal{N}_{r}(\{\delta \in \mathbbm{R}^q : ||\delta||_p \leq 3r\}, \ell_p)) = \log(\mathcal{N}_{\frac{1}{3}}(\{\delta \in \mathbbm{R}^q : ||\delta||_p \leq 1\}, \ell_p))  = O(q)$ (\cite{bartlett2013}). Thus, overall, $\mathcal{H}$ is tolerantly PAC learnable w.r.t $(\mathcal{G}, \mathcal{G}')$ with sample complexity close to what one would require in the standard PAC setting.

\end{document}